\documentclass{article} 
\usepackage{iclr2027_conference,times}

\usepackage{amsmath,amsfonts,bm}

\def\eqref#1{equation~\ref{#1}}

\def\1{\bm{1}}

\DeclareMathAlphabet{\mathsfit}{\encodingdefault}{\sfdefault}{m}{sl}
\SetMathAlphabet{\mathsfit}{bold}{\encodingdefault}{\sfdefault}{bx}{n}

\usepackage{url}
\usepackage{graphicx}
\usepackage{booktabs}
\usepackage{placeins}
\usepackage{capt-of}
\usepackage{needspace}
\usepackage{multirow}
\usepackage{microtype}

\usepackage{amsmath,amssymb}
\usepackage{algorithm}
\usepackage{algpseudocode}
\usepackage{wrapfig}
\restylefloat{algorithm}
\usepackage{natbib}

\newsavebox{\meshEqBox}
\newcommand{\meshEqPair}[2]{%
  \sbox{\meshEqBox}{$\displaystyle #1\qquad #2$}%
  \ifdim\dimexpr\wd\meshEqBox+3em\relax<\linewidth
    #1\qquad #2%
  \else
    \begin{gathered}#1\\#2\end{gathered}%
  \fi
}

\title{MESH-Harness: Self-Improving Agent \\
Harnesses via Bandit-Guided \\
Compositional Evolution}

\author{%
\parbox{\dimexpr\textwidth-2\tabcolsep\relax}{\raggedright\fontsize{9}{10}\selectfont
\textbf{Zhiwei Shang$^{1,6}$, Yu Huo$^{2,6}$, Mingrong Gong$^{3}$, An Yan$^{4,6}$, Zikun Qu$^{1}$,}\\
\textbf{Junhao Dong$^{3}$, Bryan Kian Hsiang Low$^{5}$, Chenglin Wu$^{6}$, Zhongxiang Dai$^{1}$}\\
\normalfont\fontsize{8}{9}\selectfont
$^{1}$The Chinese University of Hong Kong, Shenzhen\\
$^{2}$The Chinese University of Hong Kong\\
$^{3}$Nanyang Technological University\\
$^{4}$Fudan University\\
$^{5}$National University of Singapore\\
$^{6}$DeepWisdom\\
Correspondence to: Zhongxiang Dai \texttt{<daizhongxiang@cuhk.edu.cn>}
}}

\iclrfinalcopy
\makeatletter
\renewcommand{\@maketitle}{%
  \vspace*{-42pt}%
  \vbox{\hsize\textwidth
    {\LARGE\scshape\@title\par}%
    \vskip 7pt
    \noindent\hspace*{\tabcolsep}\@author\par
    \vskip 2.56pt}}
\makeatother
\usepackage{amsthm}
\usepackage{array}
\usepackage{tabularx}
\usepackage{longtable}
\usepackage{xcolor}
\usepackage{listings}
\usepackage[most]{tcolorbox}
\newtheorem{meshproposition}{Proposition}[section]

\definecolor{meshpanelhead}{gray}{0.25}
\definecolor{meshpanelbody}{gray}{0.96}
\lstdefinestyle{meshsupp}{
  language=Python,
  basicstyle=\ttfamily\fontsize{8.7}{10.4}\selectfont,
  keywordstyle=\bfseries,
  commentstyle=\itshape,
  stringstyle=\ttfamily,
  columns=fullflexible,
  keepspaces=true,
  showstringspaces=false,
  breaklines=true,
  breakatwhitespace=true,
  tabsize=4,
  aboveskip=0pt,
  belowskip=0pt
}
\newtcblisting{meshpanel}[1]{
  enhanced,
  listing only,
  listing options={style=meshsupp},
  title={#1},
  colback=meshpanelbody,
  colframe=meshpanelhead,
  colbacktitle=meshpanelhead,
  coltitle=white,
  fonttitle=\bfseries\normalsize,
  boxrule=0.7pt,
  arc=2mm,
  left=2.5mm,right=2.5mm,top=2mm,bottom=2mm,
  before skip=10pt,after skip=7pt
}

\usepackage{hyperref}
\hypersetup{
  pdftitle={MESH-Harness: Self-Improving Agent Harnesses via Bandit-Guided Compositional Evolution},
  pdfauthor={Zhiwei Shang, Yu Huo, Mingrong Gong, An Yan, Zikun Qu, Junhao Dong, Bryan Kian Hsiang Low, Chenglin Wu, Zhongxiang Dai}
}
\begin{document}

\maketitle
\suppressfloats[t]

\begingroup
\raggedbottom
\setcounter{topnumber}{1}
\setlength{\parskip}{2pt plus 1pt minus 0.5pt}
\setlength{\textfloatsep}{6pt plus 2pt minus 2pt}
\setlength{\floatsep}{8pt plus 2pt minus 2pt}
\setlength{\intextsep}{6pt plus 2pt minus 2pt}
\setlength{\abovecaptionskip}{3pt}
\setlength{\abovedisplayskip}{5pt plus 2pt minus 2pt}
\setlength{\belowdisplayskip}{5pt plus 2pt minus 2pt}
\setlength{\abovedisplayshortskip}{3pt plus 1pt}
\setlength{\belowdisplayshortskip}{4pt plus 1pt minus 1pt}

\begin{abstract}
An \emph{agent harness} is the code that organizes context, maintains state, and coordinates tool calls for a language model. We study how to improve the harness under a limited evaluation budget while keeping model weights fixed. Our method, \textbf{MESH-Harness}, \textcolor{black}{organizes each harness into functional modules} with explicit \textcolor{black}{role-specific} interfaces, allowing \textcolor{black}{alternative implementations of each module to be substituted and recombined}. It uses shared module representations and full-covariance LinUCB to score candidate combinations based on predicted performance and exploration value. Mixed-start coordinate ascent selects complete configurations for evaluation without enumerating the combinatorial space. Validation traces then guide local code edits, and the resulting candidates are incorporated into fixed-capacity \textcolor{black}{role-specific pools} for subsequent recombination. On text tasks, retrieval-augmented mathematical reasoning, code generation, and interactive scientific tasks, MESH-Harness outperforms Meta-Harness by 5.70, 7.01, 2.00, and 5.00 points, respectively, under matched candidate-evaluation budgets. Iterative harness optimization improves MESH-Harness by 5.63--7.79 points over its first-round configurations. For the reported configurations, aggregate test-time cost is 44.2\% lower than that of Meta-Harness, while total cost including search is 14.6\% lower.
These results show that combining module-level design reuse with feedback-driven compositional search can systematically improve agent harnesses while keeping overall optimization cost under control.
\end{abstract}

\section{Introduction}
\label{sec:introduction}

Large language models (LLMs) can act as agents by retrieving information, calling tools, and interacting with environments \citep{yao2023react}. Their behavior also depends on the external program that controls memory, context construction, tool use, and failure recovery. This program is referred to as the \emph{agent harness}. Unlike a single prompt, a harness specifies how successive model calls work together to solve a task. For example, harnesses for long-running agents support this coordination through explicit state handoffs, progress records, and execution checks \citep{young2025harnesses}. Because the harness governs how model calls work together, optimizing it offers a way to improve an agent without updating model weights. AutoHarness synthesizes action validators and policy code through iterative refinement using environment feedback \citep{lou2026autoharness}. Meta-Harness searches over task-specific Python programs, allowing a proposer agent to inspect previous candidates' code, scores, and execution traces \citep{lee2026metaharness}. Both approaches make the surrounding program an object of optimization rather than a fixed implementation.

\begin{figure*}[t]
\centering
\includegraphics[width=1.0\textwidth]{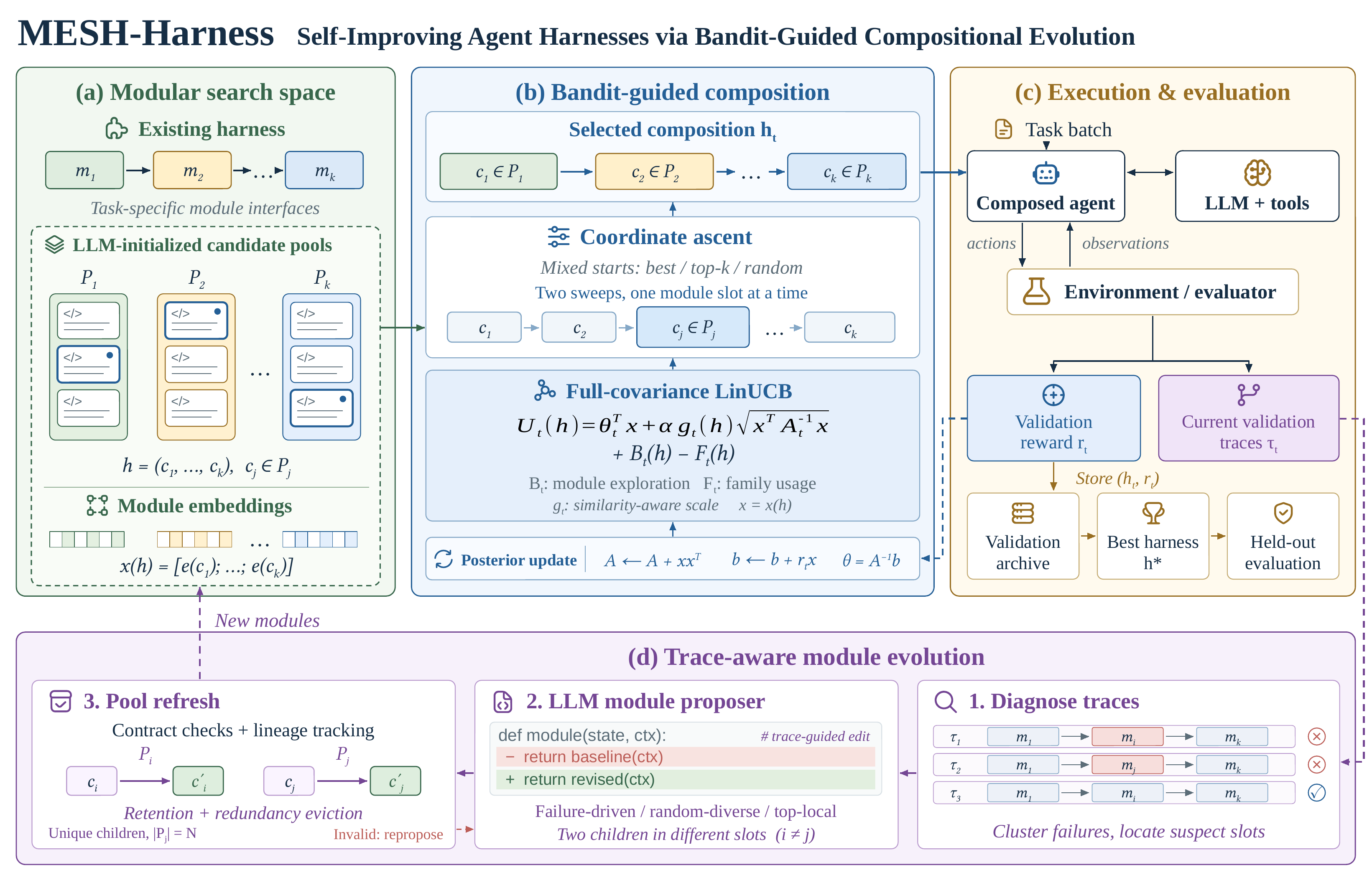}
\caption{\textbf{Overview of MESH-Harness.}
(a) \textcolor{black}{Each functional slot has a fixed interface and a pool of LLM-initialized implementations with cached embeddings}.
(b) Full-covariance LinUCB scores \textcolor{black}{configurations with one implementation per slot}, and mixed-start coordinate ascent selects a configuration for evaluation.
(c) The selected harness is executed to obtain validation rewards and traces.
(d) Trace diagnosis guides \textcolor{black}{edits to implementations} and fixed-capacity pool updates.
Rewards update the bandit model\textcolor{black}{; new implementations join their respective pools for subsequent composition}.}
\label{fig:mesh_overview}
\end{figure*}

The difficulty is that evaluating a harness can be much more expensive than proposing it. Even a small edit may require an end-to-end validation run involving many model calls and environment interactions. Meta-Harness likewise identifies candidate evaluation as its main computational bottleneck \citep{lee2026metaharness}. Under a limited budget, the search must therefore decide not only what to generate, but what to execute. We study how to use past evaluations to make these decisions and obtain larger improvements within a fixed number of candidate evaluations.

We introduce \textbf{MESH-Harness}, a bandit-guided framework for compositional harness evolution. Contextual bandits use shared features to generalize from observed rewards and balance predicted performance against uncertainty \citep{li2010contextual,chu2011linear}. To apply this idea to harnesses, we \textcolor{black}{organize each harness into functional slots with fixed interfaces, maintain a candidate pool per slot, and select one implementation per slot}. \textcolor{black}{Each implementation is encoded separately; concatenating the selected embeddings in slot order represents a complete harness}. \textcolor{black}{A shared surrogate learns from observed harness scores to predict the performance of untested combinations of module implementations}.

Full-covariance LinUCB combines these predictions with an uncertainty bonus to score module combinations. We approximately maximize this criterion through mixed-start coordinate ascent, drawing one start per round from the incumbent, top-$K$ archived configurations, or a random combination and \textcolor{black}{substituting implementations within one slot at a time} rather than enumerating the Cartesian product of candidate pools. Selection uses only cached representations and surrogate scores; the chosen complete harness is then executed on the validation tasks. This separates inexpensive candidate comparison from costly task evaluation.

\begin{figure}[t]
\centering
\includegraphics[width=\textwidth]{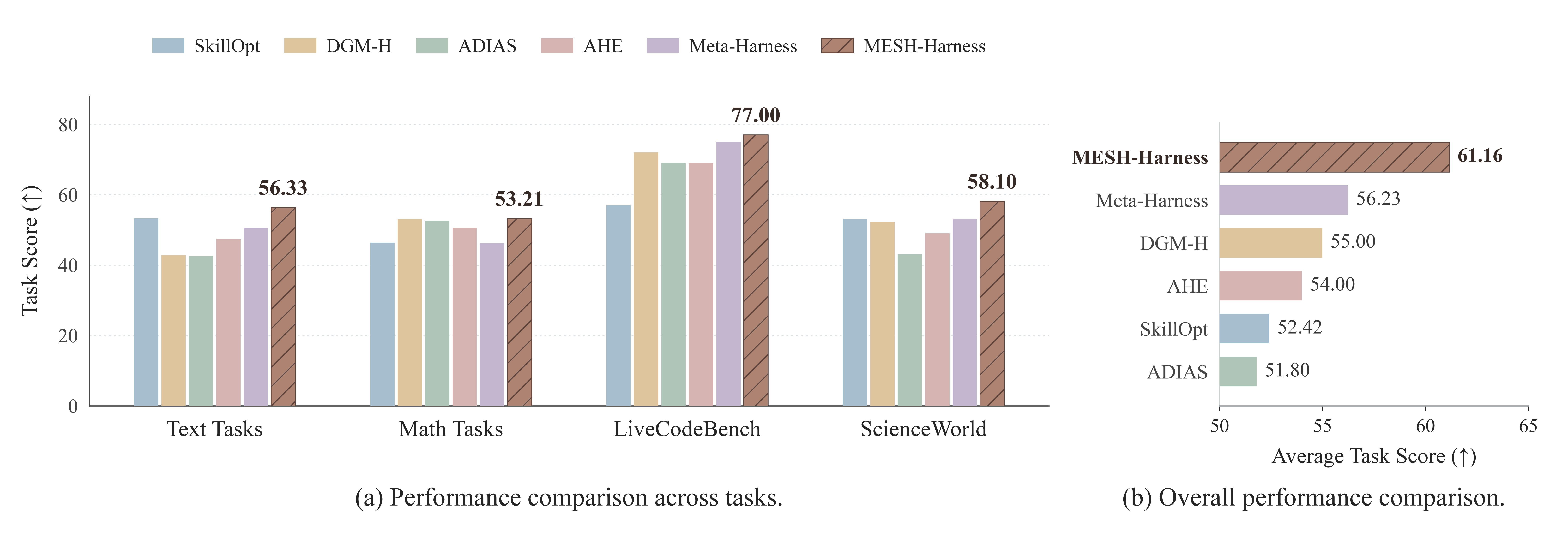}
\caption{\textbf{Test performance across four task domains.}
(a) Text macro-average, overall Math pass@1, LiveCodeBench solve rate under the fixed cap-8 protocol, and ScienceWorld mean progress score.
(b) Unweighted mean of the four task scores.
All scores are on a 0--100 scale, with higher values indicating better performance.
MESH-Harness achieves the highest score on all four aggregate metrics.}
\label{fig:overall_results}
\end{figure}

Selection alone cannot discover implementations absent from the candidate pools. \textcolor{black}{MASPOB selects prompt combinations using bandit-guided coordinate ascent but does not update its candidate pools} \citep{hong2026maspob}. MESH-Harness \textcolor{black}{instead} updates the available \textcolor{black}{implementations in each slot} during search. Validation traces provide evidence of local failures, such as missing information or repeated unsuccessful actions, and guide an LLM in \textcolor{black}{editing implementations while preserving their slot interfaces}. New implementations enter \textcolor{black}{their slot's fixed-capacity pool for combination with candidates in other slots}. Each execution thus supplies two kinds of feedback: scores update the selection model, while traces guide module evolution. A new \textcolor{black}{implementation} need not trigger an immediate end-to-end evaluation.

We evaluate MESH-Harness on text tasks, retrieval-augmented mathematical reasoning, LiveCodeBench \citep{jain2025livecodebench}, and ScienceWorld \citep{wang2022scienceworld}. Under matched candidate-evaluation budgets, it improves final test scores over Meta-Harness by 5.70, 7.01, 2.00, and 5.00 points, respectively. MESH-Harness improves by 5.63--7.79 points over its first-round configurations. On ScienceWorld, it starts at 51.90, below the ReAct reference score of 55.50, and reaches 58.10. Across all four tasks, the reported configurations reduce aggregate test-time cost by 44.2\% relative to Meta-Harness. Total cost, including search and testing the reported configurations, is 14.6\% lower.

Our contributions are:
\begin{itemize}
    \item A bandit-guided harness optimizer that allocates a limited evaluation budget using shared performance predictions and uncertainty estimates.
    \item A module-level representation and mixed-start coordinate ascent procedure that reuse evaluation feedback across configurations without enumerating the full combinatorial space.
    \item A trace-driven module evolution mechanism, evaluated on four tasks in terms of final performance, search gains, and search and test costs.
\end{itemize}

\section{Related Work}
\label{sec:related_work}

\paragraph{Improving agents through external components.}
Recent work improves agents by optimizing the external artifacts that govern their behavior, ranging from natural-language skill documents to executable harnesses.
ACE~\citep{zhang2025ace} incrementally curates execution experience into context playbooks, while SkillOpt~\citep{yang2026skillopt} refines reusable skill documents through trajectory-driven edits and validation-gated updates.
MCE~\citep{ye2026mce} co-evolves context artifacts and the context-engineering skills that produce them.
At the code level, Meta-Harness~\citep{lee2026metaharness} gives an agentic proposer access to historical programs, scores, and traces to guide harness revision.
ADIAS~\citep{jiang2026adias} maintains persistent issue states to direct full-code modifications, and DGM-H~\citep{zhang2026hyperagents} makes both the task agent and its self-improvement procedure editable.
AEvo~\citep{zhang2026aevo} uses accumulated evolution evidence to guide a meta-agent in editing the search procedure or agent context that governs subsequent evolution.
Closely related to our work, AHE~\citep{lin2026ahe} exposes harness components as editable files and uses structured trajectory evidence and edit-outcome checks to guide revision and rollback.
Whereas AHE organizes optimization around successive revisions to a working harness, MESH-Harness maintains alternative implementations in \textcolor{black}{per-slot} pools and learns which complete compositions to evaluate.
Implementations discovered in different iterations can thus be \textcolor{black}{combined by selecting one per slot}, without requiring another LLM rewrite.

\paragraph{Bandit-guided search under limited evaluation budgets.}
Feature-based bandits~\citep{li2010contextual,chu2011linear} provide a way to allocate evaluations by sharing reward information across candidates and balancing exploitation with uncertainty-driven exploration.
TRIPLE~\citep{shi2024triple} casts prompt selection as fixed-budget best-arm identification.
For multi-agent systems, MASPOB~\citep{hong2026maspob} combines a topology-aware GNN surrogate with UCB-guided coordinate search to select prompt combinations from given candidate domains.
MESH-Harness applies uncertainty-guided selection to \textcolor{black}{configurations with one executable implementation per functional slot}.
A shared reward surrogate over component representations allows past evaluations to inform the ranking of untested compositions, while coordinate search avoids enumerating the full combination space.
Unlike search over an unchanged candidate domain, this selection process is interleaved with trace-driven component evolution, so new implementations can enter the pools as optimization proceeds.

Together, these choices separate \textcolor{black}{implementation} generation from evaluation allocation.
MESH-Harness reuses existing implementations without additional LLM rewriting, shares feedback across candidate compositions, and introduces new \textcolor{black}{implementations for existing slots} from execution evidence.
This allows search to exploit prior discoveries without remaining confined to the initial candidate pools.

\section{Preliminaries}
\label{sec:preliminaries}

With a fixed execution model $M$ and tool set, a \emph{harness}
$h$ is a program that organizes model calls, maintains state,
and controls interaction with the environment. On an instance
$z\sim\mathcal D$, it produces a trajectory
$\tau\sim\operatorname{Run}(M,h,z)$ with task score $R(z,\tau)$.
Harness optimization seeks to improve expected performance
without changing model weights:
\begin{equation}
    J(h)=\mathbb E_{\textcolor{black}{z\sim\mathcal D,\;\tau\sim\operatorname{Run}(M,h,z)}}\left[R(z,\tau)\right].
    \label{eq:objective}
\end{equation}

We organize the harness into $k$ functional slots with \textcolor{black}{fixed} interfaces
$\mathcal I=\{I_1,\ldots,I_k\}$. A slot specifies a \textcolor{black}{fixed functional}
responsibility, such as retrieval or failure recovery, rather
than a single model call; execution may include branches and
loops. \textcolor{black}{Before round $t$, pool $P_{j,t}$ contains candidate implementations
satisfying $I_j$. Substitution occurs within a slot; each configuration
selects one implementation per slot}:
\begin{equation}
    \meshEqPair
    {h=(c_1,\ldots,c_k),\quad c_j\in P_{j,t},}
    {\mathcal H_t=\prod_{j=1}^{k}P_{j,t}.}
    \label{eq:search-space}
\end{equation}
The executable harness is
$\operatorname{Compose}_{\mathcal I}(h)$;
$N$ candidates per slot yield $N^k$ configurations.

Each round evaluates a complete configuration $h_t$, yielding
a validation score $r_t$, traces $\mathcal T_t$, and resource-use
records $\kappa_t$, all stored in an archive $\mathcal A_t$.
The score is aggregated under the task's validation protocol
and is an observed estimate rather than the unknown $J(h_t)$.
A bandit pull selects a complete harness: no separate module
rewards are observed. We seek a high-performing configuration
within $T$ evaluations and select the final harness from the
validation archive. Test data are not used for generation,
selection, or stopping.

\section{Method}
\label{sec:method}

\begin{wrapalgorithm}{r}{0.58\textwidth}
\caption{MESH-Harness}
\label{alg:mesh-harness}
\footnotesize
\renewcommand{\algorithmicindent}{1em}
\begin{algorithmic}
\Require $M,\mathcal I,N,T,S,L$; validation evaluator
\State Initialize checked \textcolor{black}{per-slot} pools $\mathcal P=\{P_j\}$ and embeddings
\State $A\gets\lambda I_{kd}$, $b\gets0$, $\mathcal A\gets\varnothing$, $h^\star\gets\bot$
\State Initialize use counts and lineage records
\For{$t=1,\ldots,T$}
    \State Form $U_t$ using Eq.~(\ref{eq:acquisition})
    \State $h\gets\Call{MixedStart}{\mathcal P,\mathcal A,h^\star}$
    \State $h\gets\Call{CoordinateSearch}{h,U_t,\mathcal P,S}$
    \State $h_t\gets\Call{NoveltyFallback}{h,\mathcal P,\mathcal A}$
    \State $(r_t,\mathcal T_t,\kappa_t)\gets\Call{Evaluate}{M,h_t}$
    \State Update $(A,b)$ using Eq.~(\ref{eq:update})
    \State Archive $(h_t,r_t,\mathcal T_t,\kappa_t)$; update usage
    \State $h^\star\gets\Call{SelectVal}{\mathcal A}$
    \If{$t<T$ \textbf{and} $t\bmod L=0$}
        \State $\mathcal Z\gets\Call{Diagnose}{\mathcal T_t,\mathcal A}$
        \State $\mathcal C\gets\Call{ProposeChecked}{\mathcal P,\mathcal I,\mathcal Z}$
        \ForAll{accepted $(j,c'_j)\in\mathcal C$}
            \State $c_j^-\gets\Call{Evict}{P_j,\mathcal A,h^\star}$
            \State $P_j\gets(P_j\cup\{c'_j\})\setminus\{c_j^-\}$
            \State Cache $e(c'_j)$ and lineage; $n(c'_j)\gets0$
        \EndFor
    \EndIf
\EndFor
\State \Return $h^\star$
\end{algorithmic}
\end{wrapalgorithm}

MESH-Harness combines bandit-guided configuration selection
with trace-driven module evolution. Validation scores update
a shared performance model, while execution traces guide
changes to the candidate pools.
Algorithm~\ref{alg:mesh-harness} summarizes the procedure;
Appendix~\ref{app:algorithm} specifies its state transitions and interfaces.

\subsection{Module Pools and Shared Representations}
\label{sec:mesh-representation}

An LLM initializes $N$ implementations per slot from the task
description, interfaces, and permitted training information.
Candidates \textcolor{black}{for a retrieval slot, for example, may rank evidence differently}. They must pass \textcolor{black}{their slot's interface checks} and basic
execution checks; duplicates are rejected, and invalid
implementations are revised using error feedback.

A fixed encoder maps each \textcolor{black}{candidate's} role description, interface,
and implementation summary to an embedding, followed by a fixed
projection and normalization to obtain $e(c)\in\mathbb R^d$.
The configuration representation concatenates embeddings in
slot order:
\begin{equation}
    x(h)=[e(c_1);\ldots;e(c_k)]\in\mathbb R^{kd}.
    \label{eq:representation}
\end{equation}
\textcolor{black}{The shared surrogate (i.e., $\hat\mu_t(h)$ in Section \ref{sec:mesh-search}) predicts harness performance from $x(h)$, whose fixed slot order aligns module implementations by functional role across configurations.} An observation from one configuration can therefore
affect the ranking of other combinations, including those that
have not been executed. Embeddings are cached; newly proposed
\textcolor{black}{implementations} are encoded in the same space without resetting the model.

\subsection{Bandit-Guided Compositional Search}
\label{sec:mesh-search}

Following linear contextual
bandits~\citep{li2010contextual,chu2011linear}, we maintain
shared reward statistics over configuration features.
Let $p=kd$ and $x_s=x(h_s)$. Before round $t$,
\begin{equation}
    \meshEqPair
    {A_{t-1}=\lambda I_p+\sum_{s<t}x_sx_s^\top,}
    {b_{t-1}=\sum_{s<t}r_sx_s,}
    \label{eq:statistics}
\end{equation}
where $\lambda>0$, $A_0=\lambda I_p$, and $b_0=0$.
With $\hat\theta_{t-1}=A_{t-1}^{-1}b_{t-1}$, the predicted
performance and exploration signal are
\begin{equation}
    \meshEqPair
    {\hat\mu_t(h)=x(h)^\top\hat\theta_{t-1},}
    {\sigma_t(h)=\sqrt{x(h)^\top A_{t-1}^{-1}x(h)}.}
    \label{eq:prediction}
\end{equation}
We retain the full information matrix, including cross-slot
feature co-occurrences, for joint parameter estimation and
configuration-level uncertainty. The predicted mean remains
additive over slots:
$\hat\mu_t(h)=\sum_{j=1}^{k}e(c_j)^\top\hat\theta_{j,t-1}$,
where $\hat\theta_{j,t-1}$ is the parameter block for slot $j$.
\textbf{The shared reward model and uncertainty estimate guide the
selection of complete configurations}; actual performance is
determined by end-to-end execution.
Appendices~\ref{app:transfer_identity}--\ref{app:covariance}
derive observation transfer and cross-slot uncertainty.

To account for module coverage and lineage reuse as the
pools evolve, we use
\begin{equation}
    U_t(h)=\hat\mu_t(h)+\alpha g_t(h)\sigma_t(h)
           +B_t(h)-F_t(h),
    \label{eq:acquisition}
\end{equation}
with
\begin{equation}
    \meshEqPair
    {B_t(h)=\frac{\eta}{k}\sum_{j=1}^{k}
        \frac{1}{\sqrt{1+n_{t-1}(c_j)}},}
    {F_t(h)=\frac{\gamma}{k}\sum_{j=1}^{k}
        \log\!\left(1+u_{j,t-1}(f(c_j))\right).}
    \label{eq:coverage-lineage}
\end{equation}
Here, $n_{t-1}(c)$ counts prior evaluations containing $c$,
$f(c)$ identifies its lineage family, and $u_{j,t-1}(f)$
counts that family's uses in slot $j$ over the latest $W$
evaluations. Thus, \emph{$B_t$ favors underused modules and $F_t$
discourages repeated lineage selection}.

For similarity-aware exploration, set
$g_t(h)=\max\{g_{\min},1-\beta\bar s_t(h)\}$.
Across slots, $\bar s_t(h)$ averages each candidate's maximum
nonnegative cosine similarity to previously evaluated modules
in the same slot and its parent when available; missing
references contribute zero. This reduces the exploration
premium for close variants. The coefficients
$\alpha,\eta,\gamma,\beta\geq0$ and $g_{\min}\in(0,1]$
control these terms, which affect selection only, not the
reward update.

We approximately maximize $U_t$ by mixed-start coordinate
ascent, building on the coordinate search used in
MASPOB~\citep{hong2026maspob}. Each round draws one
start from a mixture of the incumbent, the top-$K$ archived
configurations restorable by the task adapter, and a
random combination; an empty archive gives a random start.
The incumbent supports local refinement, while archived and
random starts provide alternative search points. We then
sweep through the slots:
\begin{equation}
    c_j\leftarrow
    \operatorname*{arg\,max}_{c\in P_{j,t}}
    U_t(h_{-j},c),\qquad j=1,\ldots,k,
    \label{eq:coordinate}
\end{equation}
where $(h_{-j},c)$ \textcolor{black}{substitutes candidate $c$ for the implementation
in slot $j$} and retains the latest choices elsewhere. Each update scores a complete configuration.
We use two sweeps by default, favor unevaluated configurations
on ties, and use random fallback sampling to seek a new
configuration when the result has already been evaluated.
The $S$ sweeps require $O(S\sum_j|P_{j,t}|)$ surrogate
evaluations (Appendix~\ref{app:coordinate}). Throughout these sweeps, $U_t$ remains fixed
and no harness is executed. Only the selected $h_t$ receives
a task evaluation, after which
\begin{equation}
    \meshEqPair
    {A_t=A_{t-1}+x_tx_t^\top,}
    {b_t=b_{t-1}+r_tx_t.}
    \label{eq:update}
\end{equation}
We also update the archive, module-use counts, and
lineage statistics.

\subsection{Trace-Driven Module Evolution}
\label{sec:mesh-evolution}

After every $L$ evaluations, while budget remains, MESH-Harness
groups validation traces by failure type and identifies slots
\textcolor{black}{whose implementations warrant revision}. Diagnosis uses current failures, recurring
errors, and previous edits in the relevant lineages. The
proposer mixes failure-guided repair, exploration of
less-covered designs, and local refinement of strong
configurations. Each refresh targets two distinct slots and
proposes one child \textcolor{black}{implementation} per slot from its interface, parent code,
and execution evidence. Proposals undergo the checks in
Section~\ref{sec:mesh-representation}, with a bounded number
of retries for invalid implementations. A child \textcolor{black}{implements its parent's slot and can be combined with
candidates for other slots from other rounds}.

Each accepted child replaces one \textcolor{black}{candidate in its slot's pool},
keeping the pool size fixed:
\begin{equation}
    P_{j,t+1}
    =(P_{j,t}\cup\{c'_j\})\setminus\{c_j^{-}\}.
    \label{eq:pool-refresh}
\end{equation}
Among eligible \textcolor{black}{candidates}, we evict the one with the lowest
retention value: the mean observed reward of configurations
containing it. Unevaluated \textcolor{black}{candidates} have value zero and do
not inherit parent scores. Ties favor removing the \textcolor{black}{candidate}
most similar to others in the pool. The incumbent's \textcolor{black}{selected implementations}
and the newly inserted child are protected during the update;
historical implementations and lineage records remain
archived. Pools without an accepted child are unchanged.

Refreshes preserve $A_t$ and $b_t$. A new \textcolor{black}{implementation} receives
no immediate task evaluation: subsequent bandit decisions
determine \textcolor{black}{when it is selected and which implementations fill
the other slots}. Search
therefore learns both which combinations to evaluate and
which implementations to add, without assigning an
evaluation to every proposal.
The final harness is chosen by the prespecified validation
criterion and its implementations are frozen for testing.
Generation and embedding costs count toward search
expenditure; deployment executes only the selected harness,
without further search.
\noindent
Algorithm~\ref{alg:mesh-harness} omits time subscripts on
mutable state. Appendix~\ref{app:algorithm_refresh} details
pool updates, and Appendix~\ref{app:selection} specifies
the final selection criterion.

\section{Experiments}
\label{exp}

We evaluate final performance under a limited candidate-evaluation budget, examine the contributions of selection and module evolution, and assess the cost and reuse of the resulting harnesses.

\subsection{Experimental Setup}
\label{sec:experimental_setup}

\paragraph{Tasks and evaluation.}
We study text tasks on LawBench~\citep{fei2024lawbench}, USPTO~\citep{schneider2016whats}, and Symptom-to-Diagnosis dataset~\citep{gretelai2023symptom}; retrieval-augmented mathematics constructed from OlympiadBench~\citep{he2024olympiadbench} and MathArena~\citep{dekoninck2026matharena}; code generation on LiveCodeBench~\citep{jain2025livecodebench}; and interaction in ScienceWorld~\citep{wang2022scienceworld}.
We report the macro-average over the three text datasets, overall Math pass@1 on the combined test set, LiveCodeBench solve rate under the fixed cap-8 protocol, and mean ScienceWorld progress. Math pass@1 divides the total number of correct answers across both datasets by their total number of test instances, weighting each instance equally (Appendix~\ref{app:evaluation}).
All methods use the same splits (Appendix~\ref{app:data}). Search and final harness selection use validation data; the selected implementations are frozen before held-out testing. All reported test scores are means over three test runs of the same frozen harness.

\paragraph{Comparisons and budget.}
We compare with SkillOpt~\citep{yang2026skillopt}, DGM-H~\citep{zhang2026hyperagents}, ADIAS~\citep{jiang2026adias}, AHE~\citep{lin2026ahe}, and Meta-Harness~\citep{lee2026metaharness}.
All optimization methods use the same execution model and a budget of 40 complete candidate evaluations per task. Meta-Harness evaluates $20\times2$ candidates, and MESH-Harness evaluates $40\times1$ compositions.
Meta-Harness starts from fixed harnesses, including few-shot~\citep{brown2020language} and BM25~\citep{robertson2009bm25}, in Tables~\ref{tab:text_seeds}--\ref{tab:math_seeds}, or handcrafted ReAct~\citep{yao2023react} harnesses for LiveCodeBench and ScienceWorld; MESH-Harness initializes \textcolor{black}{per-slot} pools with LLM-generated implementations.

\paragraph{Implementation.}
Both optimizers use Claude Code~\citep{anthropic2026claudecode} with Claude Opus 5~\citep{anthropic2026opus5} at \texttt{medium} effort. Unless stated otherwise, task execution uses DeepSeek V4 Flash~\citep{deepseekai2026deepseekv4} with thinking disabled.
MESH-Harness uses Qwen3-Embedding-8B~\citep{zhang2025qwen3embedding}, 10 candidates per slot, $\alpha=0.8$, and two coordinate-search sweeps per evaluation. Every five evaluations, while budget remains, it proposes one child \textcolor{black}{implementation} in each of two distinct slots. Appendix~\ref{app:settings} gives the full settings and execution limits.

\subsection{Performance under a Fixed Evaluation Cap}
\label{sec:main_results}

\par\noindent
\begin{minipage}{\linewidth}
\centering
\captionof{table}{\textbf{Held-out test performance.} All methods use 40 candidate evaluations; scores average three test runs. Text: dataset macro-average; Math: pooled pass@1; LiveCodeBench: cap-8 solve rate; ScienceWorld: mean progress. Scores use a 0--100 scale. Bold marks the column maximum; $\Delta$ compares MESH-Harness with Meta-Harness.}
\label{tab:main_results}
\small
\setlength{\tabcolsep}{5pt}
\renewcommand{\arraystretch}{1.08}
\begin{tabular*}{\linewidth}{@{\extracolsep{\fill}}lrrrr@{}}
\toprule
Method & Text & Math & LiveCodeBench & ScienceWorld \\
\midrule
SkillOpt     & 53.27 & 46.40 & 57.00 & 53.00 \\
DGM-H        & 42.78 & 53.00 & 72.00 & 52.20 \\
ADIAS        & 42.50 & 52.60 & 69.00 & 43.10 \\
AHE          & 47.39 & 50.60 & 69.00 & 49.00 \\
Meta-Harness & 50.63 & 46.20 & 75.00 & 53.10 \\
\midrule
\textbf{MESH-Harness} & \textbf{56.33} & \textbf{53.21} & \textbf{77.00} & \textbf{58.10} \\
$\Delta$ vs. Meta-Harness & $+5.70$ & $+7.01$ & $+2.00$ & $+5.00$ \\
\bottomrule
\end{tabular*}
\end{minipage}
\par

\paragraph{Main comparison.}
MESH-Harness achieves the highest reported score on each of the four aggregate metrics (Table~\ref{tab:main_results} and Figure~\ref{fig:overall_results}).
Relative to Meta-Harness, the gains are 5.70 points on Text, 7.01 on Math, 2.00 on LiveCodeBench, and 5.00 on ScienceWorld.
The advantage therefore spans context construction, retrieval, execution feedback, and sequential interaction without changing the execution model.
MESH-Harness is particularly strong on Text and ScienceWorld, while reaching the performance level of the strongest Math competitor, DGM-H (53.21 versus 53.00).

\paragraph{Where the gains arise.}
MESH-Harness exceeds the strongest fixed references by 6.33 points on Text and 10.83 on Math (Tables~\ref{tab:text_seeds}--\ref{tab:math_seeds}).
The text improvement over Meta-Harness is driven by LawBench ($+8.00$) and USPTO ($+11.00$), offsetting a decrease on Symptom-to-Diagnosis (84.90 to 83.00).
Both math subsets improve: OlympiadBench reaches 63.00 ($+4.30$), and MathArena reaches 43.10 ($+9.80$).
The gains also extend beyond context-based tasks: MESH-Harness exceeds the handcrafted references on LiveCodeBench (77.00 versus 71.00) and ScienceWorld (58.10 versus 55.50).
Appendix~\ref{app:harness_analysis} analyzes the validation-selected harnesses and their task-specific behavior.

\par\noindent
\begin{minipage}{\linewidth}
\begin{minipage}[t]{0.48\textwidth}
\centering
\captionof{table}{\textbf{Text reference harnesses.} Fixed Meta-Harness seeds appear above the divider; scores are macro-averages.}
\label{tab:text_seeds}
\small
\renewcommand{\arraystretch}{1.06}
\begin{tabular*}{\linewidth}{@{\extracolsep{\fill}}lr@{}}
\toprule
Harness & Score \\
\midrule
No memory      & 26.07 \\
Few-shot (8)   & 41.00 \\
Few-shot (32)  & 46.87 \\
Few-shot (all) & 50.00 \\
ACE memory     & 50.00 \\
MCE memory     & 48.93 \\
\midrule
Meta-Harness & 50.63 \\
\textbf{MESH-Harness} & \textbf{56.33} \\
\bottomrule
\end{tabular*}
\end{minipage}\hfill
\begin{minipage}[t]{0.48\textwidth}
\centering
\captionof{table}{\textbf{Math reference harnesses.} Fixed Meta-Harness seeds appear above the divider; scores are overall pass@1.}
\label{tab:math_seeds}
\small
\renewcommand{\arraystretch}{1.06}
\begin{tabular*}{\linewidth}{@{\extracolsep{\fill}}lr@{}}
\toprule
Harness & Score \\
\midrule
No retrieval    & 23.20 \\
Random few-shot & 41.42 \\
BM25            & 42.18 \\
ACE             & 42.38 \\
\midrule
Meta-Harness & 46.20 \\
\textbf{MESH-Harness} & \textbf{53.21} \\
\bottomrule
\end{tabular*}
\end{minipage}
\end{minipage}
\par

\begin{figure}[!t]
\centering
\includegraphics[width=0.96\textwidth]{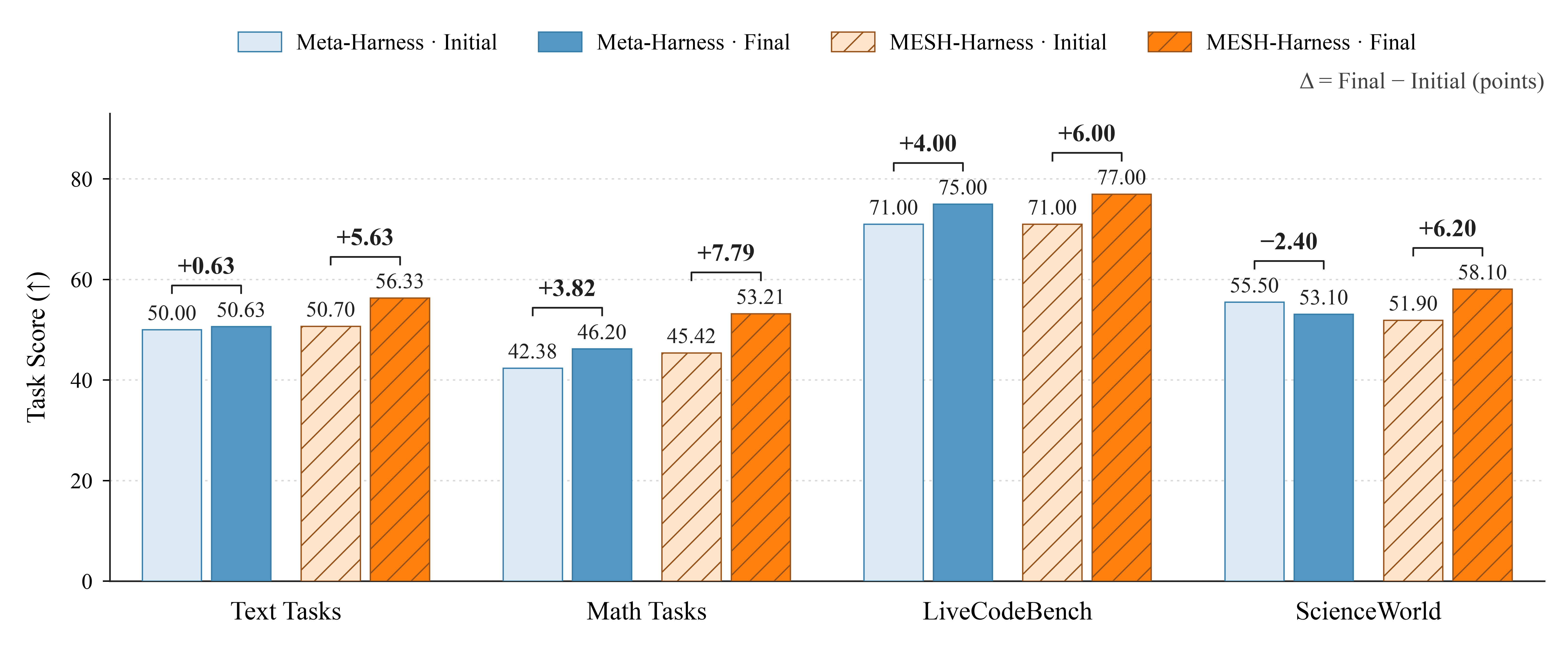}
\captionof{figure}{\textbf{Final performance relative to method-specific references.} ``Initial'' denotes a fixed reference seed for Meta-Harness and the first evaluated composition for MESH-Harness, not a shared initialization. Meta-Harness references are 50.00 (Text), 42.38 (Math, ACE), 71.00 (LiveCodeBench), and 55.50 (ScienceWorld). Brackets show final minus reference.}
\label{fig:search_gains}
\end{figure}
\par

\paragraph{Improvement beyond initialization.}
Figure~\ref{fig:search_gains} shows that MESH-Harness improves by 5.63--7.79 points over its first evaluated compositions.
On ScienceWorld, it rises from 51.90 to 58.10, despite beginning below the handcrafted reference at 55.50.
These results establish that the final performance is not simply inherited from the first composition.
The reference scores describe within-method changes; the matched-cap comparison is the final performance in Table~\ref{tab:main_results}.

\subsection{Contributions of Selection and Evolution}
\label{sec:ablation_study}

The following ablations use the three text tasks, the default executor, and the candidate-evaluation cap from Section~\ref{sec:experimental_setup}.

\paragraph{Selecting which compositions to evaluate matters.}
\textcolor{black}{Randomly selecting one implementation per slot} scores 48.80, 7.53 points below the full method (Table~\ref{tab:bandit_ablation}).
Removing only the UCB uncertainty term ($\alpha=0$), while retaining predicted reward, coverage, and lineage terms, yields 51.70.
Thus, guided selection without the uncertainty term improves on random selection, but still trails the full acquisition rule by 4.63 points.
Performance increases from $\alpha=0$ to $0.8$ and then falls at $1.6$ (Figure~\ref{fig:exploration_ablation}), favoring a moderate exploration bonus among the tested values.

\par\noindent
\begin{minipage}{\linewidth}
\begin{minipage}[t]{0.51\textwidth}
\centering
\captionof{figure}{\textbf{Exploration strength.} Text macro-average versus $\alpha$; the star marks the default. Reference lines show Meta-Harness and random selection.}
\label{fig:exploration_ablation}
\includegraphics[width=\linewidth,trim=17.5bp 12.5bp 14bp 10bp,clip]{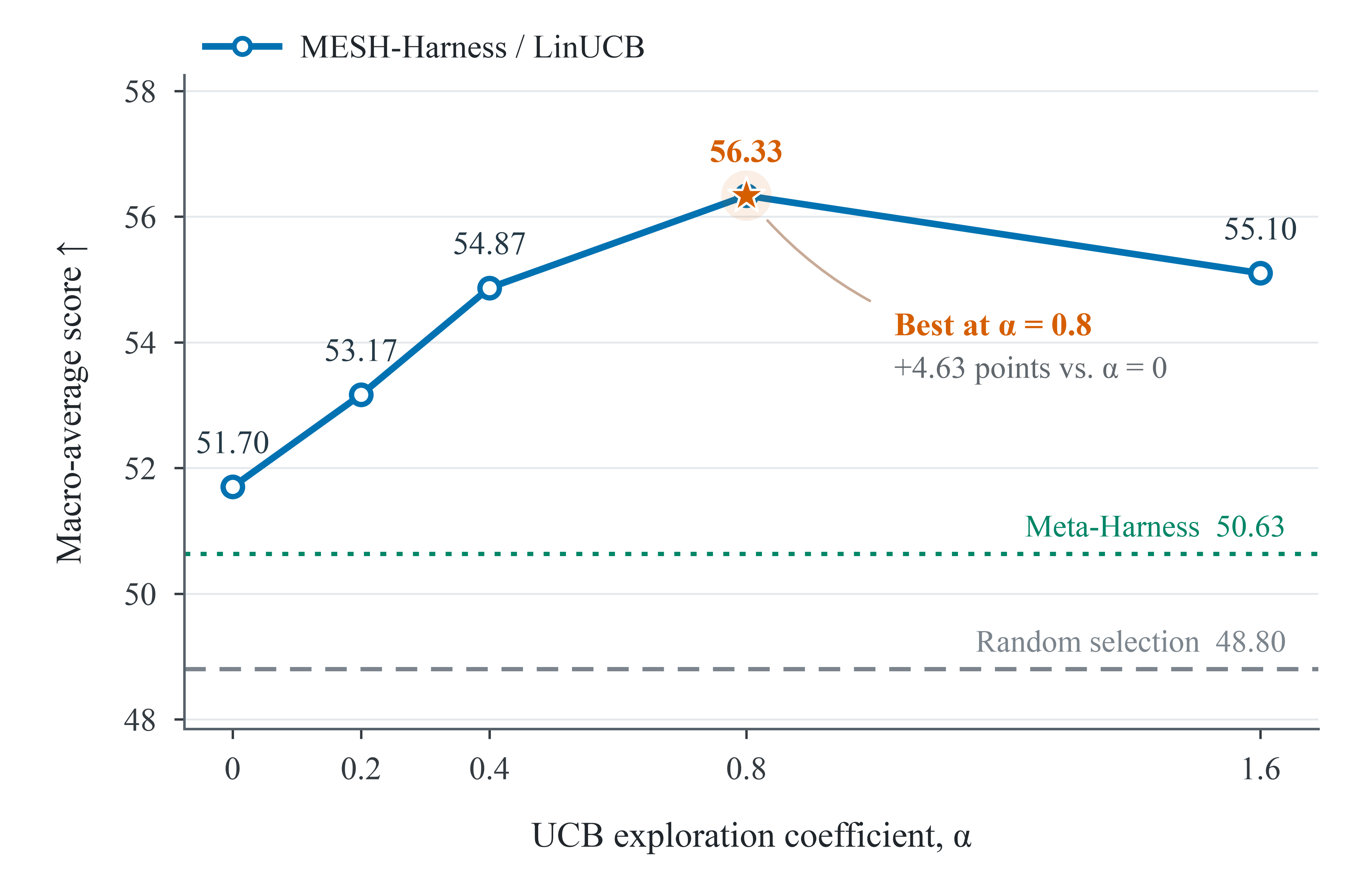}
\end{minipage}\hfill
\begin{minipage}[t]{0.47\textwidth}
\centering
\captionof{table}{\textbf{Configuration selection.} Text macro-average; $\Delta$ is relative to full MESH-Harness. Rows follow the $\alpha$ sweep at left.}
\label{tab:bandit_ablation}
\small
\setlength{\tabcolsep}{2pt}
\renewcommand{\arraystretch}{1.12}
\begin{tabular*}{\linewidth}{@{\extracolsep{\fill}}lrr@{}}
\toprule
Selection & Score & $\Delta$ \\
\midrule
No UCB bonus ($\alpha=0$) & 51.70 & $-4.63$ \\
$\alpha=0.2$ & 53.17 & $-3.16$ \\
$\alpha=0.4$ & 54.87 & $-1.46$ \\
\textbf{Full ($\alpha=0.8$)} & \textbf{56.33} & 0.00 \\
$\alpha=1.6$ & 55.10 & $-1.23$ \\
\midrule
Random composition & 48.80 & $-7.53$ \\
Meta-Harness & 50.63 & $-5.70$ \\
\bottomrule
\end{tabular*}
\end{minipage}
\end{minipage}
\par

\begin{wraptable}{r}{0.56\textwidth}
\centering
\caption{\textbf{Module evolution.} Text macro-average over three test runs and change from the full method.}
\label{tab:evolution_ablation}
\small
\setlength{\tabcolsep}{3pt}
\renewcommand{\arraystretch}{1.06}
\begin{tabular*}{\linewidth}{@{\extracolsep{\fill}}lrr@{}}
\toprule
Variant & Score & $\Delta$ \\
\midrule
\textbf{MESH-Harness} & \textbf{56.33} & 0.00 \\
No pool refresh & 51.20 & $-5.13$ \\
\shortstack[l]{Refresh without\\historical traces} & 53.80 & $-2.53$ \\
\midrule
Meta-Harness & 50.63 & $-5.70$ \\
\bottomrule
\end{tabular*}
\end{wraptable}

\paragraph{Evolving the pools improves performance.}
Keeping the initial pools \textcolor{black}{for all slots} fixed yields 51.20, a 5.13-point drop (Table~\ref{tab:evolution_ablation}).
Refreshing them without historical traces, while retaining pool refresh and bandit statistics, recovers 2.60 points; restoring trace access adds a further 2.53.
Together with the selection ablation, these results support the two parts of the design: use reward feedback to allocate evaluations among compositions, and execution evidence to improve the implementations available for future composition.

\subsection{Cost and Reuse of Optimized Harnesses}
\label{sec:cost_reuse}

\paragraph{Search and execution costs.}
Table~\ref{tab:search_execution_cost} separates optimization-inclusive costs from execution costs.
Across the four reported workloads, MESH-Harness reduces search-plus-test expenditure from \$195.0537 to \$166.6078 (14.6\%) and test-only expenditure from \$3.5528 to \$1.9835 (44.2\%).
Both costs are lower on every task; the largest test-time savings occur on LiveCodeBench (27.0\%) and ScienceWorld (59.5\%).
The final score improvements therefore do not rely on more expensive execution of the selected harnesses.

\par\noindent
\begin{minipage}{\linewidth}
\centering
\captionof{table}{\textbf{Search and execution costs (USD).} Search + test includes optimization and testing the selected configurations; test only excludes optimization. Savings are relative to Meta-Harness. Totals sum the four reported task workloads.}
\label{tab:search_execution_cost}
\small
\setlength{\tabcolsep}{3pt}
\renewcommand{\arraystretch}{1.0}
\begin{tabular*}{\linewidth}{@{\extracolsep{\fill}}lrrrrrr@{}}
\toprule
& \multicolumn{3}{c}{Search + test} & \multicolumn{3}{c}{Test only} \\
\cmidrule(lr){2-4}\cmidrule(l){5-7}
Task & Meta & MESH & Saving & Meta & MESH & Saving \\
\midrule
Text & 63.1843 & 51.1643 & 19.0\% & 0.1709 & 0.1662 & 2.8\% \\
Math & 58.8652 & 47.2431 & 19.7\% & 0.5475 & 0.4987 & 8.9\% \\
LiveCodeBench & 37.1387 & 34.7826 & 6.3\% & 0.5273 & 0.3847 & 27.0\% \\
ScienceWorld & 35.8655 & 33.4178 & 6.8\% & 2.3071 & 0.9339 & 59.5\% \\
\midrule
\textbf{Total} & \textbf{195.0537} & \textbf{166.6078} & \textbf{14.6\%} & \textbf{3.5528} & \textbf{1.9835} & \textbf{44.2\%} \\
\bottomrule
\end{tabular*}
\end{minipage}
\par

\par\noindent
\begin{minipage}{\linewidth}
\begin{minipage}[t]{0.49\textwidth}
\centering
\captionof{figure}{\textbf{Score--cost tradeoff.} Circles: MESH-Harness; squares: Meta-Harness. The dashed line connects nondominated evaluated configurations. Cost uses a log scale.}
\label{fig:execution_pareto}
\includegraphics[width=\linewidth,trim=22.5bp 18bp 11.5bp 11bp,clip]{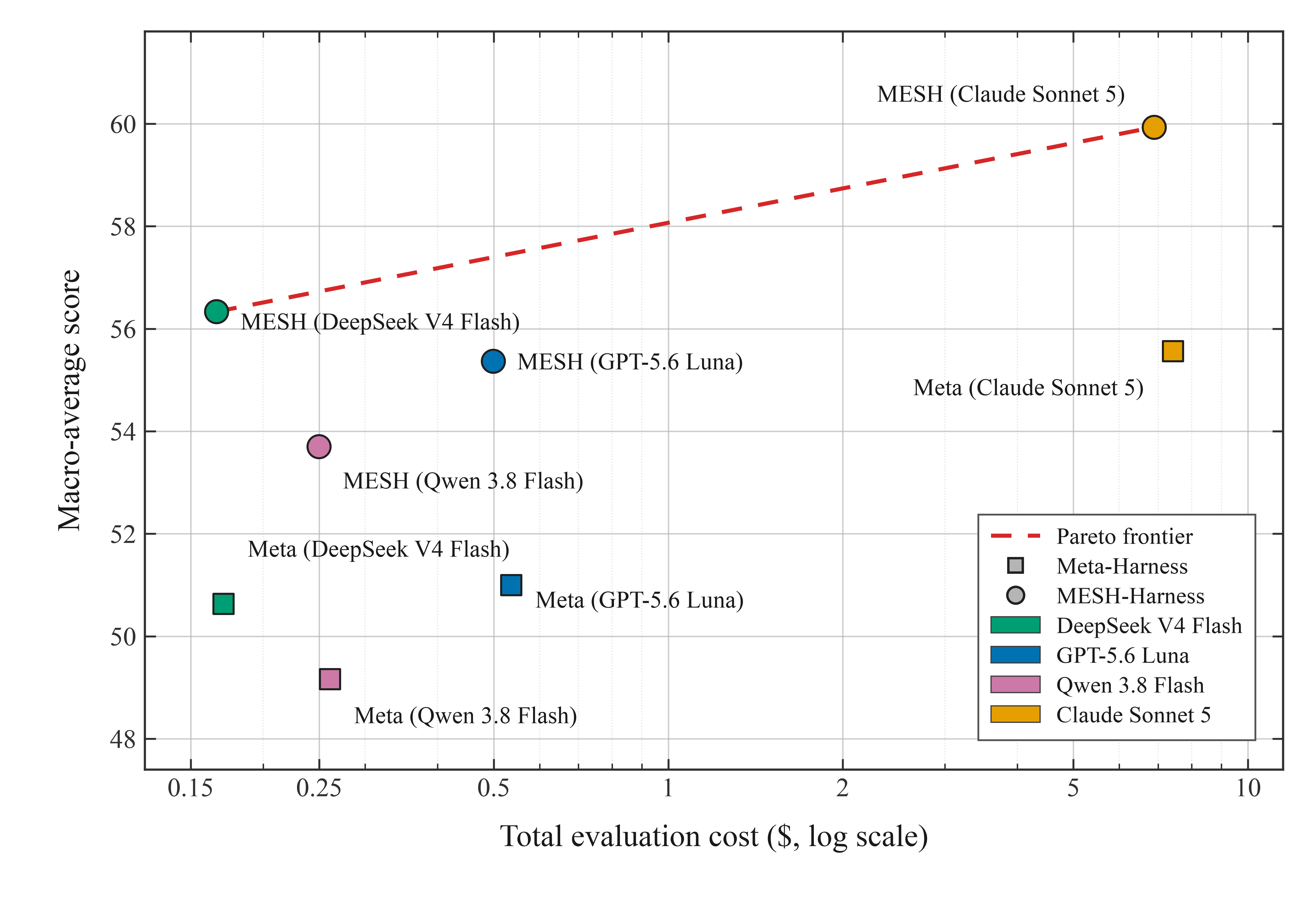}
\end{minipage}\hfill
\begin{minipage}[t]{0.49\textwidth}
\centering
\captionof{table}{\textbf{Frozen-harness transfer.} Text macro-average and execution cost (USD), without further search. Bold marks the better result within each model.}
\label{tab:execution_models}
\small
\setlength{\tabcolsep}{1.5pt}
\renewcommand{\arraystretch}{1.20}
\begin{tabular*}{\linewidth}{@{\extracolsep{\fill}}lrrrr@{}}
\toprule
& \multicolumn{2}{c}{Score $\uparrow$} & \multicolumn{2}{c}{Cost $\downarrow$} \\
\cmidrule(lr){2-3}\cmidrule(l){4-5}
Execution model & Meta & MESH & Meta & MESH \\
\midrule
\shortstack[l]{DeepSeek V4\\Flash} & 50.63 & \textbf{56.33} & 0.1709 & \textbf{0.1662} \\
\addlinespace[3pt]
\shortstack[l]{GPT-5.6\\Luna} & 51.00 & \textbf{55.37} & 0.5359 & \textbf{0.4987} \\
\addlinespace[3pt]
\shortstack[l]{Qwen 3.8\\Flash} & 49.17 & \textbf{53.70} & 0.2608 & \textbf{0.2496} \\
\addlinespace[3pt]
\shortstack[l]{Claude\\Sonnet 5} & 55.57 & \textbf{59.93} & 7.4215 & \textbf{6.8874} \\
\bottomrule
\end{tabular*}
\end{minipage}
\end{minipage}
\par

\paragraph{Reuse across execution models.}
We freeze the text harnesses and replace only the executor with GPT-5.6 Luna~\citep{openai2026gpt56}, Qwen 3.8 Flash~\citep{qwen2026qwen38flash}, or Claude Sonnet 5~\citep{anthropic2026sonnet5}.
Including the default executor, MESH-Harness retains a 4.36--5.70-point advantage and a 2.8--7.2\% cost reduction across all four models (Table~\ref{tab:execution_models}).
Among the measured configurations, its DeepSeek and Sonnet variants are the two nondominated points (Figure~\ref{fig:execution_pareto}), offering the lowest cost and highest score, respectively.

\begin{wraptable}{r}{0.55\textwidth}
\centering
\caption{\textbf{Refining a Meta-Harness solution.} Held-out text scores before and after 20 additional MESH-Harness candidate evaluations.}
\label{tab:meta_initialized_search}
\small
\setlength{\tabcolsep}{3pt}
\renewcommand{\arraystretch}{1.07}
\begin{tabular*}{\linewidth}{@{\extracolsep{\fill}}lrr@{}}
\toprule
Dataset & \shortstack[r]{Meta-Harness\\solution} & \shortstack[r]{+ MESH\\refinement} \\
\midrule
LawBench & 51.00 & \textbf{55.00} \\
Symptom-to-Diagnosis & 84.90 & \textbf{90.10} \\
USPTO & 16.00 & \textbf{18.00} \\
\midrule
Macro-average & 50.63 & \textbf{54.37} \\
\bottomrule
\end{tabular*}
\end{wraptable}

\paragraph{Reuse of an existing optimized harness.}
Starting from a Meta-Harness solution, 20 additional MESH-Harness evaluations improve the text macro-average from 50.63 to 54.37, with gains on all three datasets (Table~\ref{tab:meta_initialized_search}).
Compositional refinement can therefore build on an existing optimizer's output, not only on MESH-Harness's own initialization.
This experiment studies further refinement, separately from the matched-cap comparison.

\WFclear
\FloatBarrier

\section{Conclusion}
\label{sec:conclusion}

\looseness=-1
We introduced MESH-Harness, which combines shared module representations, bandit-guided compositional search, and trace-driven module evolution to improve agent harnesses without changing model weights.
Across four task domains, it achieves the highest aggregate scores among the evaluated methods and outperforms Meta-Harness under matched candidate-evaluation budgets.
Text ablations support both guided selection and module evolution, while further experiments on these tasks show transfer across execution models and gains from refining a Meta-Harness solution.
For the reported configurations, aggregate test-time cost is 44.2\% lower than that of Meta-Harness, and combined search-and-test expenditure is 14.6\% lower.
These results show that reusing and refining \textcolor{black}{implementations within fixed functional slots} can improve task performance while reducing the execution cost of the resulting harnesses.

\par
\endgroup
\setcounter{topnumber}{2}

\clearpage
\section*{AI Use Statement}

We used generative AI tools to improve the clarity, grammar, and presentation of the manuscript and to assist with implementing, debugging, and refining research code. Generative models also serve as the proposer and executor in MESH-Harness, as described in the method and experimental setup. The authors reviewed and revised AI-assisted text and inspected and tested AI-assisted code. The authors take full responsibility for the final manuscript, implementation, and experimental results.

\section*{Ethics Statement}

This work studies the optimization of agent harnesses with existing language models and benchmark datasets spanning text tasks, mathematical reasoning, code generation, and simulated scientific interaction. The experiments evaluate model outputs and executable programs within the specified benchmark protocols; they do not involve recruiting human participants or deploying systems in real-world decision-making. Diagnosis, legal, and chemical outputs are used only as benchmark predictions. We are not aware of specific ethical concerns arising from the experiments reported in this study.

\section*{Reproducibility Statement}

The main text and appendices detail the MESH-Harness design and experiments.\newline
Algorithm~\ref{alg:mesh-harness} and Section~\ref{sec:method} specify the MESH-Harness procedure, while Appendix~\ref{app:theory} provides the theoretical analysis and Appendix~\ref{app:algorithm} expands the optimization loop, composition selection, and module refresh procedures. Section~\ref{sec:experimental_setup} identifies the models, benchmarks, baselines, and evaluation protocol. Appendix~\ref{app:implementation} details the data splits, retrieval resources, memory boundaries, search hyperparameters, execution limits, scoring rules, and cost accounting. Appendix~\ref{app:harness_analysis} describes the selected harnesses and their task-specific control flow. All comparisons use the shared task splits and scoring protocols described in the paper. Final harnesses are selected on validation data and frozen before testing; reported test scores average three executions of each frozen harness. The models, benchmarks, and retrieval sources are identified and cited in the manuscript.

\bibliography{main}
\bibliographystyle{iclr2027_conference}

\clearpage
\appendix

\raggedbottom
\renewcommand{\eqref}[1]{\textup{(\ref{#1})}}
\section{Theoretical Supplements}
\label{app:theory}

This section analyzes the mechanisms that make compositional search useful under a limited evaluation budget: information transfer through a shared surrogate, uncertainty over module combinations, and inexpensive coordinate-wise selection. The first two results follow from the ridge state and the search rule themselves. The final result explicitly states the additional confidence-envelope assumption needed to relate acquisition-based selection to the task objective.

\subsection{How one evaluation informs other compositions}
\label{app:transfer_identity}

Let $p=kd$, and fix the embedding map throughout a search run. At a given round, write
\begin{equation}
 A=\lambda I+\sum_{s<t}x_sx_s^\top,\qquad
 b=\sum_{s<t}r_sx_s,\qquad \widehat\theta=A^{-1}b,
 \label{eq:app-ridge-state}
\end{equation}
where $\lambda>0$. Since each slot embedding has unit norm, $\|x(h)\|_2^2=k$. In particular, $A$ is positive definite and $x(h)^\top A^{-1}x(h)\leq k/\lambda$ for every composition, including a composition that has not been evaluated.

\begin{meshproposition}[Transfer through the shared ridge state]
\label{prop:app-transfer}
Suppose the next evaluated harness has feature vector $x$ and reward $r$. Let $A^+=A+xx^\top$ and $b^+=b+rx$. For any fixed comparison vector $v$, the posterior mean and squared uncertainty satisfy
\begin{align}
 v^\top\widehat\theta^+-v^\top\widehat\theta
 &=\frac{v^\top A^{-1}x}{1+x^\top A^{-1}x}
   \bigl(r-x^\top\widehat\theta\bigr),
 \label{eq:app-mean-transfer}\\
 v^\top(A^+)^{-1}v
 &=v^\top A^{-1}v-
   \frac{(v^\top A^{-1}x)^2}{1+x^\top A^{-1}x}.
 \label{eq:app-variance-transfer}
\end{align}
\end{meshproposition}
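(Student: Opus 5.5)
The plan is to derive both identities from the Sherman--Morrison formula applied to the rank-one update $A^+=A+xx^\top$. Since $A$ is positive definite (as noted before the proposition, because $\lambda>0$), $A^{-1}$ exists and $1+x^\top A^{-1}x\geq 1>0$. Sherman--Morrison then gives
\begin{equation*}
 (A^+)^{-1}=A^{-1}-\frac{A^{-1}xx^\top A^{-1}}{1+x^\top A^{-1}x}.
\end{equation*}
The variance identity \eqref{eq:app-variance-transfer} follows immediately by sandwiching this expression between $v^\top$ and $v$. I use only the symmetry of $A^{-1}$ to write $v^\top A^{-1}x\,x^\top A^{-1}v=(v^\top A^{-1}x)^2$.

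For the mean identity \eqref{eq:app-mean-transfer}, I expand $\widehat\theta^+=(A^+)^{-1}(b+rx)$ with the same formula. Abbreviate $q=x^\top A^{-1}x$ and $w=A^{-1}x$. The term $(A^+)^{-1}b$ equals
\begin{equation*}
 \widehat\theta-\frac{w\,(x^\top\widehat\theta)}{1+q}.
\end{equation*}
The term $(A^+)^{-1}rx$ equals
\begin{equation*}
 rw-\frac{rwq}{1+q}=\frac{rw}{1+q}.
\end{equation*}
Adding the two terms gives
\begin{equation*}
 \widehat\theta^+-\widehat\theta=\frac{w}{1+q}\bigl(r-x^\top\widehat\theta\bigr).
\end{equation*}
Taking the inner product with the fixed vector $v$ then yields \eqref{eq:app-mean-transfer}.

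The argument is routine linear algebra. The main step to get right is the simplification $rw(1-q/(1+q))=rw/(1+q)$, which is what makes the innovation $r-x^\top\widehat\theta$ appear. Before using Sherman--Morrison, I would also verify it directly by multiplying the candidate inverse by $A+xx^\top$, so the argument is self-contained. Finally, I would remark that $v$ need not be an evaluated configuration, which is the sense of ``transfer'' intended by the proposition.
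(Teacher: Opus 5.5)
Your proposal is correct and follows the paper's proof essentially step for step: you apply Sherman--Morrison to the rank-one update, multiply by $b+rx$ using $A^{-1}b=\widehat\theta$ to expose the residual $r-x^\top\widehat\theta$, and take the quadratic form of the same identity for the variance. Your explicit checks that $1+x^\top A^{-1}x\geq 1$ and that $A^{-1}$ is symmetric supply details the paper leaves implicit.
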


\begin{proof}
The Sherman--Morrison identity~\citep{sherman1950adjustment} gives
\begin{equation}
 (A+xx^\top)^{-1}
 =A^{-1}-\frac{A^{-1}xx^\top A^{-1}}{1+x^\top A^{-1}x}.
 \label{eq:app-sm}
\end{equation}
Multiplying by $b+rx$ and using $A^{-1}b=\widehat\theta$ yields
$\widehat\theta^+=\widehat\theta+
 A^{-1}x(r-x^\top\widehat\theta)/(1+x^\top A^{-1}x)$.
Left multiplication by $v^\top$ proves Eq.~\eqref{eq:app-mean-transfer}; taking the quadratic form of Eq.~\eqref{eq:app-sm} proves Eq.~\eqref{eq:app-variance-transfer}.
\end{proof}

These identities make the reuse mechanism explicit. An evaluation changes another composition's prediction through its overlap with the evaluated direction in the $A^{-1}$ geometry, not merely through exact module identity. The update also depends on the prediction residual: a surprising result has a different effect from one already explained by the surrogate. Uncertainty cannot increase at a fixed vector after an observation, and its reduction is largest along directions coupled to the evaluated composition. This is surrogate-level information transfer; it is not an assignment of the observed reward to every selected module as an independent contribution.

The same identities apply after a pool refresh. A new implementation is encoded in the existing feature space, so it can be scored immediately with the accumulated $(A,b)$. Its creation does not add an observation, and it receives neither its parent's reward nor a synthetic reward. Retiring an implementation changes the feasible set of compositions but does not invalidate the stored observations. This separation between the feasible set and the statistical state is what allows the pools to evolve without restarting the surrogate.

\subsection{The role of full covariance in compositional uncertainty}
\label{app:covariance}

Partition $\widehat\theta$ and $A$ by functional slot. The mean prediction remains additive:
\begin{equation}
 \widehat\mu(h)=\sum_{j=1}^{k}\widehat\theta_j^\top e(c_j).
 \label{eq:app-additive-mean}
\end{equation}
Full covariance does not introduce explicit pairwise reward features. Rather, its off-diagonal blocks retain which slot representations have been observed together:
\begin{equation}
 A_{ij}=\lambda\,\mathbf{1}\{i=j\}I_d+
 \sum_{s<t}e(c_{i,s})e(c_{j,s})^\top.
 \label{eq:app-cross-block}
\end{equation}
These blocks affect both the fitted coefficients and the uncertainty of a proposed combination.

A two-slot example isolates the distinction. Let $d=1$, and suppose all $n$ observations used $x_+=(1,1)^\top$. Then
\begin{equation}
 A=\lambda I+n x_+x_+^\top,\qquad
 \sigma^2(x_+)=\frac{2}{\lambda+2n},\qquad
 \sigma^2(x_-)=\frac{2}{\lambda},\quad x_-=(1,-1)^\top.
 \label{eq:app-cooccurrence-example}
\end{equation}
Repeatedly observing the same pairing resolves the common direction but provides no information about the orthogonal contrast. Replacing $A$ by its block diagonal would assign both directions uncertainty $2/(\lambda+n)$ and lose that distinction. In a compositional search, this matters when modules have repeatedly appeared with the same partners: frequent exposure of each module need not imply that a different pairing is well understood.

The distinction is useful for evaluation allocation: exposure to individual modules and exposure to their joint directions are different forms of information. The embedding-based mean is a surrogate for program performance, while end-to-end execution supplies the observed reward. The selection ablations in Table~\ref{tab:bandit_ablation} complement this algebraic analysis by measuring the effect of the acquisition rule under the fixed evaluation cap.

\subsection{Monotonicity and computational cost of coordinate search}
\label{app:coordinate}

During one selection step, the pools, posterior, usage counts, and lineage statistics are held fixed. Thus the acquisition $U_t(h)$ is a fixed function on the current Cartesian product. A coordinate update enumerates the active candidates for one slot while retaining the other slots.

\begin{meshproposition}[Monotonic coordinate updates]
\label{prop:app-coordinate}
If each coordinate update includes the currently selected module among its choices and selects an acquisition maximizer, the acquisition is nondecreasing over the coordinate updates. If the procedure is continued using only strict improving moves, it terminates at a coordinate-wise maximizer on a finite product space.
\end{meshproposition}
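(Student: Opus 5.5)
The plan is to use the fact, stated just before the proposition, that during a single selection step $U_t$ is a fixed real-valued function on the finite product $\mathcal H_t=\prod_{j}P_{j,t}$. This holds because the pools, $(A_{t-1},b_{t-1})$, the usage counts $n_{t-1}$, the lineage counts $u_{j,t-1}$ and the similarity terms $\bar s_t$ are all frozen. The first claim then follows by induction over the sequence of coordinate updates in Eq.~\eqref{eq:coordinate}, taken across all $S$ sweeps. Let $h^{(m)}$ denote the configuration after the $m$-th single-slot update. Suppose that update acts on slot $j$. By hypothesis the current module $c_j^{(m-1)}$ belongs to the choice set $P_{j,t}$, so $h^{(m-1)}=(h^{(m-1)}_{-j},c_j^{(m-1)})$ is one of the candidates compared. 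Because the chosen $c$ maximizes $U_t(h^{(m-1)}_{-j},\cdot)$ over a set containing the incumbent value, we get $U_t(h^{(m)})\ge U_t(h^{(m-1)})$. Chaining these inequalities gives a nondecreasing sequence. Tie-breaking toward unevaluated configurations does not break this, since any tied choice still attains the maximum. The random fallback in NoveltyFallback happens after the sweeps, so it lies outside the sequence of updates the claim covers.

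For the second claim, we restrict to strict improving moves, meaning a coordinate update is made only if $U_t(h^{(m)})>U_t(h^{(m-1)})$. The values $U_t(h^{(m)})$ are then strictly increasing. Since $\mathcal H_t$ is finite, with $|\mathcal H_t|=\prod_j|P_{j,t}|$, no configuration can occur twice: a repeat would give the same acquisition value, contradicting strict increase. The number of moves is therefore at most $|\mathcal H_t|-1$, and the procedure terminates. At termination no slot $j$ has any $c\in P_{j,t}$ with $U_t(h_{-j},c)>U_t(h)$. This is exactly the statement that $h$ is a coordinate-wise maximizer, i.e.\ for every $j$, $c_j\in\arg\max_{c\in P_{j,t}}U_t(h_{-j},c)$.

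The proof has no substantial obstacle. The one point needing care is stating precisely what ``continued using only strict improving moves'' means and what the stopping rule is: the procedure halts when a full pass over all $k$ slots produces no strict improvement. That stopping rule is what makes termination equivalent to coordinate-wise optimality. I would also remark that finiteness alone already guarantees termination, without any boundedness argument, and that the limit point need not be a global maximizer of $U_t$. Finally, I would note the per-sweep cost of $\sum_j|P_{j,t}|$ acquisition evaluations, which gives the $O(S\sum_j|P_{j,t}|)$ count quoted in Section~\ref{sec:mesh-search}.
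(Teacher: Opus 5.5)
Your proof is correct and follows essentially the same route as the paper. Because the current module is feasible in each coordinate subproblem, the acquisition is monotone, and strict improvement on a finite product space rules out revisits, so the procedure terminates at a configuration where no single-slot replacement increases $U_t$; your extra remarks on the stopping rule, lexicographic tie-breaking, and the post-sweep novelty fallback are consistent with the paper's accompanying discussion.
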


\begin{proof}
The current composition is feasible in every coordinate subproblem, so the selected replacement has acquisition at least as large. With strict improving moves, a composition cannot be revisited. Finiteness therefore implies termination; at termination no single-coordinate replacement improves the acquisition.
\end{proof}

The implementation uses two sweeps as a bounded local-ascent procedure. The proposition characterizes each feasible coordinate update; convergence to a coordinate-wise maximizer would additionally require continuing strict improvement to termination. Novelty-based tie breaking preserves nondecrease for equal acquisition values. Duplicate avoidance is a subsequent selection step, detailed in Section~\ref{app:algorithm_selection}.

For slot pool sizes $N_1,\ldots,N_k$ and $S$ sweeps, the coordinate comparisons require $S\sum_jN_j$ acquisition evaluations, plus final scoring and any start-selection or duplicate-handling overhead. Exhaustive comparison would require $\prod_jN_j$ evaluations. With $N_j=10$ and $S=2$, this is 140 coordinate comparisons instead of $10^7$ for the seven-slot tasks, and 100 instead of $10^5$ for the five-slot ScienceWorld controller. These counts concern inexpensive surrogate comparisons, not benchmark executions. Only the selected complete harness consumes a candidate-evaluation slot.

The dense statistical state occupies $O(p^2)$ space. With a cached inverse, a rank-one update and a quadratic-form score each require $O(p^2)$ arithmetic; the agent implementation uses this update. The text and math implementation instead stores $(A,b)$ and evaluates the corresponding dense linear solves. These implementations realize the same statistical update with different linear-algebra costs. The comparison above concerns acquisition-query complexity; embedding newly proposed modules is accounted for as initialization or refresh work.

\subsection{Decomposing uncertainty and approximate selection}
\label{app:acquisition_scope}

Coverage, lineage, and similarity adjustments modify evaluation priorities while leaving the observed rewards unchanged. The following decomposition separates these adjustments from uncertainty and approximate maximization using the confidence-envelope formulation of linear bandit analysis~\citep{abbasiyadkori2011improved}. Consider a round on which, for some $q_t\geq0$, the surrogate satisfies the uniform envelope
\begin{equation}
 |J(h)-\widehat\mu_t(h)|\leq q_t\sigma_t(h)
 \quad\text{for all }h\in\mathcal H_t.
 \label{eq:app-envelope}
\end{equation}
Write $V_t=\widehat\mu_t+q_t\sigma_t$ and $D_t=U_t-V_t$. Let $h_t^*$ maximize $J$ over the current pools, and define the acquisition-search error
$\varepsilon_t=\max_{h\in\mathcal H_t}U_t(h)-U_t(h_t)\geq0$.

\begin{meshproposition}[A conditional one-step decomposition]
\label{prop:app-decomposition}
Under Eq.~\eqref{eq:app-envelope},
\begin{equation}
 J(h_t^*)-J(h_t)
 \leq 2q_t\sigma_t(h_t)
 +\operatorname{osc}_{\mathcal H_t}(D_t)+\varepsilon_t,
 \label{eq:app-decomposition}
\end{equation}
where $\operatorname{osc}(D)=\max D-\min D$.
\end{meshproposition}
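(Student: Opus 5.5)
The plan is a standard optimism-style chain of inequalities. We use the envelope Eq.~\eqref{eq:app-envelope} at the two points $h_t^*$ and $h_t$. The identity $U_t=V_t+D_t$ converts statements about the idealized optimistic index $V_t$ into statements about the implemented acquisition $U_t$. The approximation gap $\varepsilon_t$ absorbs the fact that coordinate search need not return the global maximizer of $U_t$. No property of the ridge state beyond the envelope is needed, and Proposition~\ref{prop:app-coordinate} is not invoked; only the definition of $\varepsilon_t$ matters.

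First, the upper half of the envelope at $h_t^*$ gives $J(h_t^*)\leq V_t(h_t^*)$. Writing $V_t(h_t^*)=U_t(h_t^*)-D_t(h_t^*)$ and bounding $U_t(h_t^*)\leq\max_{\mathcal H_t}U_t=U_t(h_t)+\varepsilon_t$, we obtain
\begin{equation*}
J(h_t^*)\leq U_t(h_t)+\varepsilon_t-D_t(h_t^*)
= V_t(h_t)+D_t(h_t)-D_t(h_t^*)+\varepsilon_t .
\end{equation*}
Next, we bound $D_t(h_t)-D_t(h_t^*)\leq\max D_t-\min D_t=\operatorname{osc}_{\mathcal H_t}(D_t)$. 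This holds because both points lie in $\mathcal H_t$: $h_t^*$ by definition, and $h_t$ because the selection, including the novelty fallback, returns an element of the current Cartesian product. Finally, the lower half of the envelope at $h_t$ gives $J(h_t)\geq\widehat\mu_t(h_t)-q_t\sigma_t(h_t)$, so $V_t(h_t)-J(h_t)\leq 2q_t\sigma_t(h_t)$. Subtracting $J(h_t)$ from the chain then yields Eq.~\eqref{eq:app-decomposition}.

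There is no real analytic obstacle. The only step needing care is bookkeeping: the maximum in $\varepsilon_t$ and the oscillation must both be taken over the same set $\mathcal H_t$ that contains $h_t^*$ and $h_t$, with the pools frozen during the selection step (as noted in Section~\ref{app:coordinate}). Maxima exist because $\mathcal H_t$ is finite, which makes $\varepsilon_t$ and $\operatorname{osc}$ well defined. One can also remark that the envelope is assumed rather than derived, so the bound is conditional on the round satisfying it, exactly as stated.
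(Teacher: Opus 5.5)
Your proof is correct and follows essentially the same route as the paper's: both apply the upper envelope at $h_t^*$ and the lower envelope at $h_t$, and bound $V_t(h_t^*)-V_t(h_t)=U_t(h_t^*)-U_t(h_t)+D_t(h_t)-D_t(h_t^*)$ by $\varepsilon_t+\operatorname{osc}_{\mathcal H_t}(D_t)$. You also state explicitly that $h_t\in\mathcal H_t$, which both $\varepsilon_t\geq 0$ and the oscillation bound need; the paper leaves this bookkeeping point implicit.
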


\begin{proof}
The envelope gives $J(h_t^*)\leq V_t(h_t^*)$ and
$J(h_t)\geq V_t(h_t)-2q_t\sigma_t(h_t)$. Moreover,
$V_t(h_t^*)-V_t(h_t)
=U_t(h_t^*)-U_t(h_t)+D_t(h_t)-D_t(h_t^*)
\leq\varepsilon_t+\operatorname{osc}(D_t)$.
Combining the inequalities proves the claim.
\end{proof}

For MESH-Harness,
$D_t=(\alpha g_t-q_t)\sigma_t+B_t-F_t$.
Equation~\eqref{eq:app-decomposition} separates three sources of selection error: uncertainty in the chosen composition, the range of the acquisition adjustment, and the gap left by approximate search. It is a conditional statement under Eq.~\eqref{eq:app-envelope}; the empirical $\alpha$ is a search coefficient rather than an asserted calibrated confidence radius. Coverage and lineage regularize exploration of an evolving candidate space, while coordinate search bounds the cost of comparing those candidates.

Pool refresh changes the feasible set without resetting the statistical state. Protecting the incumbent preserves an already evaluated configuration, while retaining historical embeddings allows subsequent selection to reuse all completed observations. Section~\ref{app:algorithm} makes these state transitions explicit, including the different roles of the composition reward, the pool-retention statistic, and the validation criterion used for final selection.

\clearpage
\section{Algorithmic Details}
\label{app:algorithm}

This section expands Algorithm~\ref{alg:mesh-harness} into the procedures used for initialization, composition selection, and pool refresh. The central design is to separate proposing an implementation from deciding to evaluate it. A checked module enters a reusable pool; only a complete composition selected by the bandit receives an end-to-end validation run. The resulting score updates the shared surrogate, while the associated traces inform subsequent code changes.

\subsection{Search state and complete procedure}
\label{app:algorithm_state}

The state consists of the active pools $\mathcal P=\{P_j\}_{j=1}^k$, a catalog $\mathcal V$ retaining all accepted implementations and their cached embeddings, the ridge statistics $(A,b)$, the validation archive $\mathcal A$, and the set $\mathcal K$ of evaluated composition keys. A key is the ordered tuple of component identifiers. Usage records $\mathcal U$ store each module's evaluation count, accumulated reward, parent, lineage family, and selected rounds. An initial module starts its own family; a child retains its parent's family. The encoder and task interfaces remain fixed throughout the run.

Algorithm~\ref{alg:app-complete} uses $t$ to count completed candidate evaluations, starting from one. The task adapter \textsc{EvaluateVal} implements the dataset-specific execution and scoring rules in Section~\ref{app:implementation}. \textsc{SelectVal} applies the prespecified validation criterion in Section~\ref{app:selection}; neither the acquisition score nor a test score is substituted for that criterion.

\begin{algorithm}[H]
\caption{MESH-Harness: complete optimization procedure}
\label{alg:app-complete}
\small
\begin{algorithmic}[1]
\Require Fixed executor $M$, proposer $G$, encoder $E$, ordered interfaces $\mathcal I$
\Require Permitted initialization data $D_{\mathrm{init}}$, training data $D_{\mathrm{tr}}$, validation data $D_{\mathrm{val}}$
\Require Fixed task resources $\mathcal C$; pool size $N$, evaluation budget $T$, refresh interval $L$
\State $\mathcal P,\mathcal V,\mathcal A,\mathcal K,\mathcal U\gets\varnothing$; $h^\star\gets\bot$
\For{$j=1,\ldots,k$}
    \State $P_j\gets\varnothing$
    \For{$i=1,\ldots,N$}
        \State $c\gets\Call{CheckedProposal}{G,I_j,D_{\mathrm{init}},\mathcal V_j,\mathrm{initial}}$ \Comment{Algorithm~\ref{alg:app-checks}}
        \State Cache $e(c)\gets\Call{EncodeNormalize}{E,I_j,c}$; initialize its family and zero usage
        \State $P_j\gets P_j\cup\{c\}$; $\mathcal V_j\gets\mathcal V_j\cup\{c\}$
    \EndFor
\EndFor
\State $A\gets\lambda I_{kd}$; $b\gets0$; initialize the seeded random generator
\For{$t=1,\ldots,T$}
    \State Hold $(A,b,\mathcal P,\mathcal U)$ fixed during the following selection step
    \State $h_t\gets\Call{SelectComposition}{\mathcal P,\mathcal V,A,b,\mathcal A,\mathcal U,\mathcal K,h^\star}$ \Comment{Algorithm~\ref{alg:app-selection}}
    \State $H_t\gets\operatorname{Compose}_{\mathcal I}(h_t)$; $x_t\gets[e(c_{1,t});\ldots;e(c_{k,t})]$
    \State $(r_t,\mathcal T_t,\kappa_t)\gets\Call{EvaluateVal}{M,H_t,D_{\mathrm{tr}},D_{\mathrm{val}},\mathcal C}$
    \State $A\gets A+x_tx_t^\top$; $b\gets b+r_tx_t$
    \State $\mathcal A\gets\mathcal A\cup\{(t,h_t,r_t,\mathcal T_t,\kappa_t)\}$; $\mathcal K\gets\mathcal K\cup\{\operatorname{key}(h_t)\}$
    \ForAll{$c\in h_t$}
        \State $n(c)\gets n(c)+1$; $s(c)\gets s(c)+r_t$; append $t$ to its usage history
    \EndFor
    \State Update recent family uses and committed validation-trace history in $\mathcal U$
    \State $h^\star\gets\Call{SelectVal}{\mathcal A}$
    \If{$t<T$ \textbf{and} $t\bmod L=0$}
        \State $(\mathcal P,\mathcal V,\mathcal U)\gets\Call{RefreshPools}{t,h_t,h^\star,\mathcal P,\mathcal V,\mathcal A,\mathcal U}$
        \Statex \hspace{\algorithmicindent}\hspace{\algorithmicindent}\Comment{Algorithm~\ref{alg:app-refresh}; no task evaluation and no change to $(A,b,\mathcal K)$}
    \EndIf
    \State Persist the completed-round state and resource-use records
\EndFor
\State $h^\star\gets\Call{SelectVal}{\mathcal A}$
\State \Return $\Call{Freeze}{h^\star,\mathcal V,\mathcal I,M,\mathcal C}$ and the validation archive
\end{algorithmic}
\end{algorithm}

\clearpage
\subsection{Mixed-start composition selection}
\label{app:algorithm_selection}

The acquisition in Eq.~\eqref{eq:acquisition} is evaluated on complete compositions, even when only one slot is being changed. For a candidate $c$ in slot $j$, let $\mathcal R_j(c)$ contain previously evaluated modules in that slot and its direct parent, when available. With normalized embeddings, the similarity term is
\begin{equation}
 s_j(c)=\max\bigl(\{0\}\cup\{e(c)^\top e(v):v\in\mathcal R_j(c)\}\bigr),
 \qquad \bar s(h)=\frac1k\sum_{j=1}^k s_j(c_j).
 \label{eq:app-similarity-detail}
\end{equation}
Thus $U(h)$ uses the ridge mean, full-covariance uncertainty scaled by $\max(g_{\min},1-\beta\bar s(h))$, the module-coverage bonus, and the recent-family penalty. All these quantities are computed from cached representations and completed evaluations. Coordinate comparisons do not call the proposer, executor, or task evaluator.

\begin{algorithm}[H]
\caption{Mixed-start coordinate search with novelty-aware tie breaking}
\label{alg:app-selection}
\small
\begin{algorithmic}[1]
\Require Search state from Algorithm~\ref{alg:app-complete}; $S=2$, $K=5$, random-attempt limit $R=200$
\Function{SelectComposition}{$\mathcal P,\mathcal V,A,b,\mathcal A,\mathcal U,\mathcal K,h^\star$}
    \State Construct $U$ from Eqs.~\eqref{eq:acquisition}--\eqref{eq:coverage-lineage} and \eqref{eq:app-similarity-detail}
    \State $\mathcal B\gets$ top-$K$ validation-reward entries restorable by the task's archive adapter
    \State Draw $u\sim\operatorname{Uniform}[0,1)$
    \If{$u<0.65$ \textbf{and} $h^\star\neq\bot$}
        \State $h\gets h^\star$
    \ElsIf{$u<0.85$ \textbf{and} $\mathcal B\neq\varnothing$}
        \State $h\gets$ a uniformly sampled composition from $\mathcal B$
    \Else
        \State $h\gets\Call{RandomNew}{\mathcal P,\mathcal K,R}$
    \EndIf
    \For{$s=1,\ldots,S$}
        \For{$j=1,\ldots,k$}
            \State For every $c\in P_j$, form $h^{(c)}\gets(h_{-j},c)$ and compute $U(h^{(c)})$
            \State $c^+\gets\operatorname*{arg\,max}_{c\in P_j}\bigl(U(h^{(c)}),\mathbf1\{\operatorname{key}(h^{(c)})\notin\mathcal K\},\operatorname{birth}(c),\operatorname{id}(c)\bigr)$
            \State $h\gets(h_{-j},c^+)$ \Comment{Lexicographic maximization; retain latest choices elsewhere}
        \EndFor
    \EndFor
    \If{$\operatorname{key}(h)\in\mathcal K$}
        \State $h\gets\Call{RandomNew}{\mathcal P,\mathcal K,R}$
    \EndIf
    \State \Return $h$
\EndFunction
\Statex
\Function{RandomNew}{$\mathcal P,\mathcal K,R$}
    \For{$a=1,\ldots,R$}
        \State Draw each $c_j$ independently and uniformly from $P_j$; $h\gets(c_1,\ldots,c_k)$
        \If{$\operatorname{key}(h)\notin\mathcal K$}
            \State \Return $h$
        \EndIf
    \EndFor
    \State \Return the last sampled $h$
\EndFunction
\end{algorithmic}
\end{algorithm}

The start mixture is sampled once per evaluation, rather than running three searches and evaluating their winners. The archived-start adapter reconstructs component identities from stored code. The agent runner restricts this bank to fully active compositions; the Text/Math runner can restore historical starts, whose slots are replaced from the active pools during the first sweep. The evaluated composition is in the active product space in either case. The monotonicity result in Proposition~\ref{prop:app-coordinate} applies whenever the current coordinate is among the feasible replacements. Novelty is a tie breaker, not an additional reward, and bounded random sampling is the duplicate fallback rather than an exhaustive search for an unseen composition.

\clearpage
\subsection{Trace-guided proposals and fixed-capacity pool updates}
\label{app:algorithm_refresh}

Refresh ranks validation failure groups by recurring count and then diagnostic confidence. Failure-guided actions scan primary and secondary suspected slots, using the corresponding module in the latest composition. Coverage-oriented actions rank active parents by recent family use and evaluation count. Local-refinement actions rank incumbent modules by recent family use. Remaining ties use the runner's deterministic creation-time and identifier ordering. These alternatives supply different parent designs through the same slot interface.

The default plan first reserves an available failure-guided action, then fills the remaining position from the rotating order $(\mathrm{failure},\mathrm{failure},\mathrm{coverage},\mathrm{local})$. A slot can appear at most once. At refresh $q$, the order is rotated by $(q-1)\bmod4$; refreshes follow evaluations 5, 10, 15, 20, 25, 30, and 35. \textsc{NextAction} below applies these rankings while excluding already selected slots. The planning loop is bounded by 12 attempts in Text/Math and $4\max(3,k)$ in the agent runner.

\begin{algorithm}[H]
\caption{Trace-guided refresh without an immediate candidate evaluation}
\label{alg:app-refresh}
\small
\begin{algorithmic}[1]
\Require Proposer $G$, encoder $E$, interfaces $\mathcal I$; two children per refresh in distinct slots
\Function{RefreshPools}{$t,h_t,h^\star,\mathcal P,\mathcal V,\mathcal A,\mathcal U$}
    \State $\mathcal Z\gets$ recurring failure groups and representative validation traces from $\mathcal A$
    \State $\Pi\gets[\,]$; $J\gets\varnothing$
    \State $a\gets\Call{NextAction}{\mathrm{failure},\mathcal Z,h_t,h^\star,\mathcal P,\mathcal U,J}$
    \If{$a\neq\bot$}
        \State Append $a$ to $\Pi$; add its slot to $J$
    \EndIf
    \State $\omega\gets$ the rotating default mode order for this refresh
    \For{$i=1,\ldots,R_{\mathrm{plan}}$}
        \State \textbf{if} $|\Pi|=2$ \textbf{then break}
        \State $a\gets\Call{NextAction}{\omega_i,\mathcal Z,h_t,h^\star,\mathcal P,\mathcal U,J}$
        \If{$a\neq\bot$}
            \State Append $a$ to $\Pi$; add its slot to $J$
        \EndIf
    \EndFor
    \State Require $|\Pi|=2$; otherwise report an incomplete refresh plan
    \If{some planned slot has no old module outside $h^\star$ that can be evicted}
        \State \Return $(\mathcal P,\mathcal V,\mathcal U)$ \Comment{Keep the fixed-capacity pools unchanged}
    \EndIf
    \State $\mathcal C_{\mathrm{new}}\gets\varnothing$
    \ForAll{actions $a=(j,c_{\mathrm{parent}},\mathrm{mode})\in\Pi$}
        \State $Z_j\gets$ slot contract, parent code, failure evidence, and bounded related history
        \State $c'_j\gets\Call{CheckedProposal}{G,I_j,Z_j,\mathcal V_j,\mathrm{mode}}$ \Comment{Algorithm~\ref{alg:app-checks}}
        \State Cache $e(c'_j)$; set its parent and family; initialize $n(c'_j)=s(c'_j)=0$
        \State $\mathcal C_{\mathrm{new}}\gets\mathcal C_{\mathrm{new}}\cup\{(j,c'_j)\}$
    \EndFor
    \ForAll{$(j,c'_j)\in\mathcal C_{\mathrm{new}}$}
        \State $P_j^+\gets P_j\cup\{c'_j\}$; $\mathcal D_j\gets\{c\in P_j:c\notin h^\star\}$
        \State $c_j^-\gets\operatorname*{arg\,min}_{c\in\mathcal D_j}\bigl(v(c),\delta(c;P_j^+),\operatorname{id}(c)\bigr)$
        \State $P_j\gets P_j^+\setminus\{c_j^-\}$; retain both implementations in $\mathcal V_j$
        \State Mark $c'_j$ active and $c_j^-$ retired; record the proposal and replacement in $\mathcal U$
    \EndFor
    \State Require $|P_j|=N$ for every slot; commit the updated pool and history records
    \State \Return $(\mathcal P,\mathcal V,\mathcal U)$ \Comment{$A$, $b$, and evaluated keys are unchanged}
\EndFunction
\end{algorithmic}
\end{algorithm}

The retention and redundancy quantities used in the lexicographic eviction rule are
\begin{equation}
 v(c)=\begin{cases}s(c)/n(c),&n(c)>0,\\0,&n(c)=0,\end{cases}
 \qquad
 \delta(c;P)=\min_{v\in P\setminus\{c\}}\bigl(1-e(c)^\top e(v)\bigr).
 \label{eq:app-retention}
\end{equation}
Lower observed mean reward is removed first; smaller nearest-neighbor distance breaks a tie. The incumbent and new child are protected during replacement. Retired code remains archived; later selection uses the ordinary acquisition, without forced child evaluations.

\clearpage
\subsection{Admissibility, history construction, and frozen execution}
\label{app:algorithm_contracts}

Initialization and refresh share the same admissibility principle: a candidate must implement its assigned responsibility and be executable through the declared interface. \textsc{Check} in Algorithm~\ref{alg:app-checks} combines syntax and signature validation, the permitted-capability checks, task-specific return-shape and state checks, and small execution fixtures. It also rejects duplicate identifiers and duplicate implementations within the slot's historical catalog. Rejections are returned to the proposer as concrete correction feedback. The default limit is 20 attempts per requested module for both initialization and refresh.

\begin{algorithm}[H]
\caption{Checked module generation and isolated evaluation of the frozen harness}
\label{alg:app-checks}
\small
\begin{algorithmic}[1]
\Function{CheckedProposal}{$G,I_j,Z,\mathcal V_j,\mathrm{mode}$}
    \State $F\gets[\,]$ \Comment{Rejected code and admissibility feedback}
    \For{$a=1,\ldots,20$}
        \State $c\gets\Call{Propose}{G,I_j,Z,\mathrm{mode},F}$
        \State $(\mathrm{ok},\mathrm{reason})\gets\Call{Check}{c,I_j,\mathcal V_j}$
        \If{$\mathrm{ok}$}
            \State \Return $c$
        \EndIf
        \State Append the rejected implementation and reason to $F$
    \EndFor
    \State Report generation exhaustion; do not commit an unchecked module or an undersized pool
\EndFunction
\Statex
\Function{EvaluateFrozen}{$H^\star,D_{\mathrm{test}},D_{\mathrm{tr}},\mathcal C$}
    \For{$r=1,2,3$}
        \State Initialize an independent test repetition $r$
        \State Reset task memory and environment episodes; initialize permitted training memory
        \State $S^{(r)}\gets\Call{TrustedTest}{H^\star,D_{\mathrm{test}},D_{\mathrm{tr}},\mathcal C}$
        \State Store predictions, trusted scores, and resource use; do not update the optimizer
    \EndFor
    \State \Return $\frac13\sum_{r=1}^3 S^{(r)}$
\EndFunction
\end{algorithmic}
\end{algorithm}

\paragraph{History is selective context, not a new retrieval corpus.}
Failure statistics are accumulated over the committed validation history. The proposer receives a bounded view: up to eight error groups, at most two representative examples per group, up to six related family edits, and up to three higher-scoring historical trajectories for currently failed problems. Contrasts are chosen preferentially from configurations differing in fewer slots. The additional history context is limited to 24,000 characters by dropping complete records. Each record retains its originating evaluation and component identities, so the proposer can distinguish an observed result from the stated intent of an untested edit. These contrasts guide a modification hypothesis; the next complete-harness evaluation supplies its observed reward.

\paragraph{Three state transitions have different roles.}
A completed evaluation changes $(A,b)$, the archive, and usage statistics. A refresh changes which programs are available while retaining those statistics. Finalization freezes a validation-selected composition and removes the optimizer from the execution path. In particular, the empirical mean $v(c)$ is used for pool retention, whereas the complete-harness reward $r_t$ is the observation for the bandit. A new module starts with zero observations and is represented immediately in the same feature space. This is the operational counterpart of Proposition~\ref{prop:app-transfer}: reuse does not require assigning an inherited or synthetic reward to a new implementation.

\paragraph{Evaluation budget.}
The default initialization creates 70 modules for the seven-slot tasks or 50 for ScienceWorld. Seven refreshes can add at most 14 modules while preserving an active space of $10^7$ or $10^5$ compositions. The budget contains 40 complete-composition evaluations; generation, embedding, and checks are accounted for separately. Final testing follows Algorithm~\ref{alg:app-checks}.

\clearpage
\section{Implementation Details}
\label{app:implementation}

This section describes the datasets, runtime settings, and evaluation protocol used in Section~\ref{sec:experimental_setup}. MESH-Harness and Meta-Harness share the task data, execution model, and scoring rules. Their search budgets count complete candidate evaluations; each candidate may make several model calls while solving the task.

\subsection{Datasets and retrieval resources}
\label{app:data}

Table~\ref{tab:app-splits} summarizes the task splits. Training data provide examples or initial memory, validation data guide harness optimization and selection, and held-out data measure the selected harness. The splits remain fixed across methods and repeated test runs.

\begin{table}[ht]
\centering
\caption{Task splits and prediction units. A dash indicates no separate training split. Math contains 200 validation problems from each source; its 500 test problems comprise 254 from OlympiadBench and 246 from MathArena. ScienceWorld counts are episodes.}
\label{tab:app-splits}
\small
\setlength{\tabcolsep}{5pt}
\begin{tabular}{@{}lrrrp{5.0cm}@{}}
\toprule
Dataset & Train & Validation & Test & Prediction or interaction unit \\
\midrule
USPTO & 50 & 30 & 100 & Precursor strings for a product molecule \\
Symptom-to-Diagnosis & 200 & 50 & 212 & A diagnosis from a symptom description \\
LawBench & 200 & 50 & 100 & A set of charges from a case description \\
Mathematical reasoning & -- & 400 & 500 & A final mathematical answer \\
LiveCodeBench & 100 & 100 & 100 & One selected Python program per problem \\
ScienceWorld & 66 & 22 & 102 & One interactive episode \\
\bottomrule
\end{tabular}
\end{table}

\paragraph{Text tasks.}
The three text datasets require different outputs. USPTO asks for reactant precursors given a product molecule~\citep{schneider2016whats}; Symptom-to-Diagnosis asks for a diagnosis from symptoms~\citep{gretelai2023symptom}; and LawBench asks for the applicable charges in a legal case~\citep{fei2024lawbench}. For symptom diagnosis, we use Gretel's 22-class adaptation of the original Symptom2Disease dataset, which filters diagnosis categories and rewrites symptom descriptions with an LLM. We use the same task-specific output and scoring rules for all harnesses.

\paragraph{Mathematical reasoning.}
Math combines answer-based problems from OlympiadBench~\citep{he2024olympiadbench} and MathArena~\citep{dekoninck2026matharena}. The development set contains 200 problems from each source, totaling 400 problems, and is used for harness optimization. The held-out test set contains 254 OlympiadBench problems and 246 MathArena problems, totaling 500 problems. Final evaluation follows the common held-out protocol used for Meta-Harness: the two sources are pooled, and every test instance receives equal weight. Section~\ref{app:evaluation} defines this aggregation. The development and test questions are disjoint, and answers are evaluated by the corresponding source-specific programmatic graders.

All retrieval-capable math harnesses use a fixed corpus of 433,926 problem--solution pairs drawn from NuminaMath~\citep{li2024numinamath}, Omni-MATH~\citep{gao2024omnimath}, DeepMath~\citep{he2025deepmath}, OpenMathReasoning~\citep{moshkov2025aimo2}, and PolyMath~\citep{aimocorpus2026polymath}. Table~\ref{tab:app-math-composition} gives its composition. The corpus is filtered against the evaluation questions and deduplicated before search; the same corpus is used for validation and testing. Retrieval indexes the full corpus. Each solution is truncated to at most 4,000 characters during preprocessing and 3,000 characters when inserted into a prompt.

\begin{table}[ht]
\centering
\caption{Math retrieval corpus after evaluation-overlap filtering and deduplication.}
\label{tab:app-math-composition}
\small
\begin{tabular}{@{}lr@{}}
\toprule
Retrieval source & Problem--solution pairs \\
\midrule
NuminaMath & 55,968 \\
Omni-MATH & 3,201 \\
DeepMath & 98,670 \\
OpenMathReasoning & 267,647 \\
PolyMath & 8,440 \\
\midrule
Total & 433,926 \\
\bottomrule
\end{tabular}
\end{table}

\paragraph{Code generation.}
We use fixed training, validation, and test splits from LiveCodeBench release v6~\citep{jain2025livecodebench}, with 100 problems in each split. The test set contains 23 easy, 22 medium, and 55 hard problems. A harness can use public examples to test and repair its program; private tests are reserved for final scoring. The cap-8 protocol is defined in Section~\ref{app:cap8}.

\paragraph{Interactive scientific tasks.}
ScienceWorld~\citep{wang2022scienceworld} uses the episode splits provided with ADIAS~\citep{jiang2026adias}, covering 22 task types. Each type contributes three training variations and one validation variation, with 102 held-out episodes in total. Episodes use the easy simplifications, a limit of 30 agent turns, and an environment step limit of 200. During search, each iteration samples 15 training episodes without replacement and evaluates all 22 validation episodes.

\subsection{Memory and information available to each harness}
\label{app:boundaries}

The optimizer's validation archive, a harness's runtime memory, and the fixed retrieval corpus serve different roles. Validation scores and traces guide code evolution. Runtime memory supports the current task, while retrieval supplies external examples. Held-out answers remain with the evaluator throughout execution.

\paragraph{Text.}
Each candidate builds a separate memory for each dataset through one pass over its training examples, receiving the gold label after each training prediction. Validation and test predictions use this training-derived memory without adding held-out labels. Memory is rebuilt for each candidate and each final repetition.

\paragraph{Math.}
The harness receives the problem, its answer-format instructions, and access to the fixed retrieval corpus. The evaluator retains the reference answer and grader. Retrieved solutions belong to the external corpus; development and test solutions are not added to it. A candidate's verifier can inspect and refine an answer using the available context, but cannot consult the reference answer.

\paragraph{LiveCodeBench and ScienceWorld.}
Code-generation harnesses receive problem statements, starter-code interfaces, and public examples. Only public-test results can be used for repair. ScienceWorld harnesses receive the task description and observations from their own actions, with memory reset at the start of every episode. Validation traces from these interactions can guide subsequent module proposals, as specified in Section~\ref{app:algorithm_contracts}.

\subsection{Search settings and execution limits}
\label{app:settings}

Text, Math, and LiveCodeBench have seven functional slots; ScienceWorld has five. Each slot starts with ten LLM-generated implementations. Table~\ref{tab:app-settings} lists the shared search parameters. The module encoder is Qwen3-Embedding-8B~\citep{zhang2025qwen3embedding}; each slot is represented by a normalized 128-dimensional vector, cached throughout the run. Newly proposed modules use the same encoder and dimensionality reduction as the initial pool.

\begin{table}[ht]
\centering
\caption{Default search settings. The window $W$ and refresh interval are measured in completed candidate evaluations.}
\label{tab:app-settings}
\small
\setlength{\tabcolsep}{5pt}
\begin{tabular}{@{}lp{7.3cm}@{}}
\toprule
Setting & Value \\
\midrule
Candidate-evaluation budget & 40 \\
Implementations per slot & 10 \\
Feature dimensions & $d=128$ per slot; $p=896$ for seven slots, $p=640$ for five \\
Ridge regularization; UCB coefficient & $\lambda=1$; $\alpha=0.8$ \\
Coverage; family-reuse penalty & $\eta=0.04$; $\gamma=0.04$ \\
Similarity attenuation; minimum scale & $\beta=0.25$; $g_{\min}=0.5$ \\
Recent family-usage window & $W=10$ \\
Start probabilities & Incumbent 0.65, archived 0.20, random 0.15 \\
Archived start candidates & Top $K=5$ configurations \\
Coordinate sweeps & 2 per selection \\
Novelty sampling; module proposal limit & 200 random attempts; 20 proposals per requested module \\
Pool refresh & Every 5 evaluations, while budget remains \\
Children per refresh & At most 2, in distinct slots \\
Search seed & 42 \\
\bottomrule
\end{tabular}
\end{table}

\paragraph{Models and token budgets.}
Both optimizers use Claude Code with Claude Opus 5 at medium effort for proposals and DeepSeek V4 Flash with thinking disabled for execution, as in Section~\ref{sec:experimental_setup}. Maximum output lengths per executor call are 512 tokens for Text, 2,400 for Math, 8,000 for LiveCodeBench, and 512 for ScienceWorld. Text and ScienceWorld use temperature zero; the selected math harness uses temperature 0.1.

\paragraph{Task-specific computation.}
Math permits one primary solver call and at most one verifier call per problem. LiveCodeBench allocates separate draft and repair calls and skips repair when a draft passes every public example. ScienceWorld permits at most one solver call per agent turn. Retrieval, output parsing, and program tests are local operations. Executor identity and scoring remain fixed while the optimizer changes module implementations.

\subsection{Metrics and aggregation}
\label{app:evaluation}

Scores are reported on a 0--100 scale. Each selected harness is frozen and evaluated three times on the same held-out data. For task score $S^{(r)}$ in repetition $r$, the reported score is $S=\frac13\sum_{r=1}^{3}S^{(r)}$. The cross-task score in Figure~\ref{fig:overall_results} is the unweighted mean of the four task scores.

\paragraph{Text.}
Let $s_d^{(r)}$ be accuracy on dataset $d$ in repetition $r$. Text gives equal weight to the three datasets:
\begin{equation}
 S_{\mathrm{Text}}^{(r)}=
 \frac{s_{\mathrm{USPTO}}^{(r)}+s_{\mathrm{Symptom}}^{(r)}+s_{\mathrm{Law}}^{(r)}}{3}.
 \label{eq:app-text-metric}
\end{equation}
USPTO uses exact set agreement on normalized precursor strings; Symptom-to-Diagnosis uses exact diagnosis agreement after case, whitespace, and punctuation normalization; LawBench requires agreement on the full set of charges. The same dataset-macro convention defines the validation reward.

\paragraph{Math.}
We use the pooled pass@1 protocol shared with Meta-Harness. Let $n_{\mathrm O}=254$ and $n_{\mathrm A}=246$ be the numbers of test instances from OlympiadBench and MathArena, and let $C_{\mathrm O}^{(r)}$ and $C_{\mathrm A}^{(r)}$ be their correct-answer counts in repetition $r$. Then
\begin{equation}
 S_{\mathrm{Math}}^{(r)}=
 100\,\frac{C_{\mathrm O}^{(r)}+C_{\mathrm A}^{(r)}}{n_{\mathrm O}+n_{\mathrm A}}.
 \label{eq:math-aggregation}
\end{equation}
Every instance has equal weight, so the two source-level percentages are weighted by their test-set sizes. Each source's programmatic grader evaluates the final extracted answer. Internal retrieval or verification calls are part of one harness execution; the resulting final answer is the prediction scored for that instance.

\paragraph{LiveCodeBench: cap-8 solve rate.}
\phantomsection
\label{app:cap8}
For each problem, the evaluator concatenates the public and private test cases, in that order, and retains at most eight. The selected program solves the problem if it passes every retained case within the six-second timeout per case. The solve rate is the percentage of test problems solved. A harness may generate several drafts or use public feedback for repair, but only its final selected program is scored. The cap refers to test cases per problem.

\paragraph{ScienceWorld.}
For episode $i$, let $s_{i,t}^{(r)}$ be the native environment score at step $t$. The episode contributes its highest attained progress, clipped to the 0--100 range:
\begin{equation}
 z_i^{(r)}=\min\!\left(100,\max\!\left(0,\max_t s_{i,t}^{(r)}\right)\right),
 \qquad
 S_{\mathrm{SW}}^{(r)}=\frac{1}{102}\sum_{i=1}^{102}z_i^{(r)}.
 \label{eq:app-sw-metric}
\end{equation}
Partial progress therefore contributes to the score, while full episode completion corresponds to 100.

\subsection{Final selection and cost accounting}
\label{app:selection}

\paragraph{Validation-based selection.}
The final harness is selected from the validation archive. Math ranks candidates by development pass@1, breaking ties by lower mean input-token use and then by candidate identifier. LiveCodeBench and ScienceWorld use validation score, then lower total token use and candidate identifier. Text uses validation accuracy under the strict component interface. The selected code, executor settings, and retrieval resources are fixed before testing. Each test repetition starts with fresh task memory and environment episodes.

\paragraph{Search budget.}
MESH-Harness evaluates 40 complete compositions; Meta-Harness evaluates two candidates in each of 20 proposal rounds~\citep{lee2026metaharness}. Pool initialization, embedding, interface checks, and coordinate comparisons are optimization work. Reference-harness evaluations are recorded separately from the 40 new-candidate evaluations. All initialization and reference-execution expenses are included in search cost.

\paragraph{Reported cost.}
The test-only totals in Table~\ref{tab:search_execution_cost} measure execution of the frozen harnesses. Search-plus-test totals additionally include proposer, executor, and embedding usage during optimization, with provider-reported cache accounting. At deployment, only the selected harness and its task resources are required. Infrastructure interruptions are resumed as the same evaluation; an executed answer or program that fails the task receives the task's failure score.

\clearpage
\section{Analysis of Optimized Harnesses}
\label{app:harness_analysis}

We analyze the final harnesses selected by validation performance on the four tasks. Their structures show how MESH turns a pool of reusable implementations into task-specific programs: modules determine what information to retain, how to present it to the model, and how to act on the resulting response. We first examine these decisions within each task, then relate the selected components to their origins in the pool. The boxes reproduce prompt or code excerpts from the final implementations, with only line breaks or outer indentation adjusted for display.

\subsection{Text: evidence composition with explicit answer contracts}
\label{app:harness-text}

The Text harness separates seven decisions: maintaining training examples, defining the answer space, retrieving evidence, composing context, allocating computation, checking the response, and extracting the final answer. A shared answer contract connects these decisions across single-label prediction, multi-label prediction, and structured-string generation. The budget policy identifies the requested answer type from the input and stored examples, allowing downstream components to preserve the semantics of each task. Structured strings, for example, remain open-ended outputs rather than entries in a finite label inventory.

The selected memory module removes exact input--target duplicates and, when its pool exceeds capacity, preferentially removes redundant examples from large signature groups. The retriever ranks examples using weighted feature overlap, input-length proximity, and script-dependent features, while suppressing near-duplicates and repeated answer labels. The composer displays at most four supplied examples and adds a short ``Decisive difference'' annotation to each: features shared with the query but absent from the other displayed cases are identified explicitly. Retrieval and composition thus serve complementary roles: one selects examples, while the other exposes the distinctions the model should consider.

\begin{meshpanel}{Text: explicit label-set reasoning}
Examine each actor and each act on its own and decide
which labels it supports.
Report every applicable label. A combination never seen
together in the cases below is allowed as long as each
label is a known atomic label.
Before answering, re-scan the input for a supported label
you left out, then drop any label with no direct support
in the input.
\end{meshpanel}

The answer contract continues through generation and parsing. A JSON field carries the final answer, with separate rules for a complete label, a combination of labels, or a structured string. For the latter, the prompt requests preservation of stereochemistry, charges, and ring closures. The optimized program therefore joins evidence selection, comparison, and answer formatting through explicit interfaces. These connected choices make complete-program evaluation relevant: an example is useful to the solver only through the context and decision rules with which it is presented.

\subsection{Math: compact problem context and exact-answer discipline}
\label{app:harness-math}

The Math harness contains seven slots for query analysis, budget allocation, retrieval, context composition, solving, verification, and answer parsing. The selected combination centers on the requested mathematical quantity and its output form. Its analyzer distinguishes counting, finding all solutions, extremal questions, and exact-value questions, then extracts constraints and corresponding obligations. These include completeness for a solution set, or an attaining construction and a matching bound for an extremum.

The selected composer preserves the complete problem and adds compact planning notes and an exact-answer contract. It does not insert the retrieved passages, even though retrieval is available through the task interface. This is a concrete choice within the optimization space: a component may determine which available information enters the solver context. Here, the resulting prompt keeps the problem, its constraints, and the required form of the answer together.

\begin{meshpanel}{Math: compact context and exact answers}
Solve one olympiad math problem. No reference material
is supplied.

EXACT FORM: keep radicals, fractions, factorials and
symbolic constants unevaluated rather than decimal; use
set or interval notation for collections or ranges;
omit units unless the problem asks for them.

Run the check, then emit the complete JSON final_answer;
if later checking corrects it, emit a new complete JSON
object with the corrected value.
\end{meshpanel}

The solver adds a problem-dependent instruction: maintain a ledger when many named quantities occur, substitute explicitly given numerical constants when applicable, or proceed through the governing identity. For long derivations it requests forward progress without repeated restarts and a complete answer once a defensible value is available. The verifier checks specific completion defects, including a missing answer after truncation and an unresolved symbol where a definite result is requested. The parser first uses explicit answer fields, then applies answer-region and exact-form fallback rules. The resulting division of labor carries the answer specification from problem analysis through derivation and final extraction, while preserving a compact solver context.

\subsection{LiveCodeBench: complementary drafts and failure-directed repair}
\label{app:harness-lcb}

The LiveCodeBench harness separates problem analysis, budget selection, prompt construction, draft generation, public-test feedback, repair selection, and code parsing. The analyzer extracts the entry-point contract, requested output, input-size bounds, and boundary conditions. The prompt distinguishes a function or class method from a standard-input program and reproduces public examples. The selected budget requests two drafts with repair enabled. Both drafts use temperature zero, with different instructions: one models the process directly; the other is asked to re-derive the requested output quantity independently, using requirement sentences extracted from the statement when available.

Each draft is evaluated on the public tests. The repair selector compares failures across drafts: a failing row is \emph{local} if another draft passes it, \emph{contested} if the drafts produce different wrong outputs, and \emph{shared} otherwise. Syntax, timeout, and runtime defects take precedence when choosing the repair instruction.

\begin{meshpanel}{LiveCodeBench: public-feedback arbitration}
mine = _clip(r.get('actual'), 300)
tag = 'shared'
for o in others:
    c = _cell(o, i)
    if c is None:
        continue
    if c.get('passed') is True:
        tag = 'local'
        break
    if _clip(c.get('actual'), 300) != mine:
        tag = 'contested'
\end{meshpanel}

The feedback interface retains the evidence needed for this comparison, including outputs and failures for individual public-test rows. The repair module can therefore use a passing alternative to guide a local correction, or direct attention to the required output when multiple drafts disagree or fail together. Its prompt includes a small set of failing rows, requests preservation of passing behavior, and warns against special-casing public examples. This case illustrates how reusable components can exchange structured execution evidence: draft generation supplies alternatives, feedback characterizes them, and repair turns those observations into a targeted instruction.

\subsection{ScienceWorld: planning with observed preconditions}
\label{app:harness-sw}

The ScienceWorld harness uses five slots: evidence memory, goal-phase planning, action-candidate ranking, a focus guard, and recovery. Evidence memory records attempted actions, observed changes, score increments, locations, and focus history. The planner converts this record into a concrete next subgoal; ranking and guard modules constrain which actions are proposed and accepted. Recovery uses the same state and permission interfaces to replace an invalid or repeatedly ineffective action.

The selected planner also illustrates how a new implementation is incorporated into the pool. It was introduced at search iteration 24 by a trace-aware, failure-driven refresh targeting incomplete goals. The planner constructs an ordered chain of preconditions: perform a permitted focus action, navigate toward an absent target, inspect a visible object's exact identity, establish a missing circuit element, measure an unresolved property, or execute a report branch supported by an observation. The following excerpt distinguishes an unknown property from a satisfied report condition.

\begin{meshpanel}{ScienceWorld: observation-dependent planning}
try:
    val = observed_property(r['condition'], st, bd)
except Exception:
    val = None
if val is None:
    if pending is None:
        pending = r
elif val is True and ready is None:
    ready = r
\end{meshpanel}

When progress stalls, the planner can move to a less-attempted later precondition from its chain. The focus guard and recovery module then check proposed actions through their own interfaces. The refreshed planner was selected at iterations 27, 29, 32, 35, and 36, and the final iteration-36 combination joins it with four components from the initial pool. A new planning policy can therefore be evaluated and reused while the surrounding evidence, ranking, guard, and recovery implementations remain available for recombination.

\subsection{Reuse, selection, and evolution in the final programs}
\label{app:harness-artifacts}

The final Text, Math, and LiveCodeBench harnesses each combine seven implementations from the initial pool; ScienceWorld combines four initial implementations with the refreshed planner. Their provenance illustrates the two roles of the pool: preserving alternatives for combination selection and admitting new implementations during search. Retention and evolution operate at the level of individual slots, while evaluation measures the behavior of the assembled program.

Across tasks, the selected programs organize different forms of evidence: comparisons among examples in Text, mathematical constraints and answer forms in Math, public-test outcomes in LiveCodeBench, and observed preconditions in ScienceWorld. Stable interfaces allow these control policies to be stored and recombined without prescribing a single reasoning workflow. Together with the main-text ablations, these cases connect the search design to the concrete programs it produces.

\end{document}